\def\eqref#1{equation~\ref{#1}}
\def\1{\bm{1}}
\DeclareMathAlphabet{\mathsfit}{\encodingdefault}{\sfdefault}{m}{sl}
\SetMathAlphabet{\mathsfit}{bold}{\encodingdefault}{\sfdefault}{bx}{n}
\everypar\expandafter{\the\everypar\looseness=-1}
\pgfplotsset{compat=newest}
\definecolor{First}{RGB}{0,0,0}
\definecolor{Second}{RGB}{0,0,0}
\definecolor{Third}{RGB}{0,0,0}
\newcommand{\first}[1]{\textcolor{First}{\bm{#1}}}
\newcommand{\second}[1]{\textcolor{Second}{\underline{#1}}}
\newcommand{\ii}{i}
\newcommand{\Lc}{\textbf{L}^{\vec{\mathcal{F}}}}
\newcommand{\Qc}{\textbf{Q}^{\vec{\mathcal{F}}}}
\newcommand{\Dvc}{\mathbf{D}_V}
\theoremstyle{definition}
\newtheorem{definition}{Definition}
\newtheorem{lemma}{Lemma}
\newcommand{\approach}{DSHN\xspace}
\newcommand{\light}{DSHNLight\xspace}
\newcommand{\ndatasetsgood}{6\xspace}
\newcommand{\ndatasetstotal}{7\xspace}
\newcommand{\nbaselines}{13\xspace}
\title{Directional Sheaf Hypergraph Networks: Unifying Learning on Directed and Undirected Hypergraphs}
\author{\\
\parbox{\textwidth}{
{\textbf{Emanuele Mule}$^{1{*}}$
}
\hspace{0.32em}
{\textbf{Stefano Fiorini}$^{2}$}
\hspace{0.32em}
{\textbf{Antonio Purificato}$^{1,3{\dagger}}$}
\hspace{0.36em} 
{\textbf{Federico Siciliano}$^{1}$} \\
\hspace{0.32em}
{\textbf{Stefano Coniglio}$^{4}$}
\hspace{0.32em}
{\textbf{Fabrizio Silvestri}$^{1}$} \\[0.6em]
\normalfont
$^{1}$ Sapienza University of Rome, Rome, Italy \\
$^{2}$ Independent Researcher \\
$^{3}$ Amazon Research \\
$^{4}$ University of Bergamo, Bergamo, Italy
}}
\colorlet{col4}{blue}
\begin{document}

\maketitle
\renewcommand{\thefootnote}{\fnsymbol{footnote}}
\footnotetext[1]{Corresponding author: \texttt{emi.mule2001@gmail.com}.}
\footnotetext[2]{Work done outside of the company.}
\renewcommand{\thefootnote}{\arabic{footnote}}

\begin{abstract}
Hypergraphs provide a natural way to represent higher-order interactions among multiple entities. While undirected hypergraphs have been extensively studied, the case of directed hypergraphs, which can model oriented group interactions, remains largely under-explored despite its relevance for many applications. Recent approaches in this direction often exhibit an implicit bias toward homophily, which limits their effectiveness in heterophilic settings.
Rooted in the algebraic topology notion of Cellular Sheaves, Sheaf Neural Networks (SNNs) were introduced as an effective solution to circumvent such a drawback. While a generalization to hypergraphs is known, it is only suitable for undirected hypergraphs, failing to tackle the directed case.
In this work, we introduce \emph{Directional Sheaf Hypergraph Networks} (\emph{\approach}), a framework integrating sheaf theory with a principled treatment of asymmetric relations within a hypergraph. From it, we construct the \emph{Directed Sheaf Hypergraph Laplacian}, a complex-valued operator by which we unify and generalize many existing Laplacian matrices proposed in the graph- and hypergraph-learning literature.
Across 7 real-world datasets and against \nbaselines baselines, \approach achieves relative accuracy gains from 2\% up to 20\%, showing how a principled treatment of directionality in hypergraphs, combined with the expressive power of sheaves, can substantially improve performance.
\end{abstract}

\section{Introduction}\label{sec:introduction}
Learning from structured, non-Euclidean data has been dominated by Graph Neural Networks (GNNs), which propagate and aggregate features along pairwise edges~\citep{GNN, noneuclidean}.
Sheaf Neural Networks (SNNs)~\citep{hansen2020sheafneuralnetworks, bodnar2023neuralsheafdiffusiontopological} extend GNNs by leveraging the algebraic concept of a \emph{cellular sheaf}. They assign vector spaces to nodes and edges, along with learnable \emph{restriction maps} propagating information between them. By operating in higher-dimensional feature spaces, SNNs mitigate oversmoothing and enhance performance on heterophilic graphs, where neighboring nodes exhibit dissimilar features~\citep{purificato2025sheaf4rec}.

While effective on graphs, both GNNs and SNNs are inherently limited to dyadic relations. Many real-world systems such as social networks~\citep{Benson_2016, Benson_2018}, biological systems~\citep{traversa2023robustnesscomplexitydirectedweighted}, and protein interactions~\citep{murgas2022hypergraph} exhibit multi-way relationships that cannot be captured by pairwise links alone. Hypergraph Neural Networks (HGNNs) address this limitation by modeling hyperedges as sets of nodes, enabling the learning of multi-entity dependencies~\citep{murgas2022hypergraph,chen2022preventingoversmoothinghypergraphneural}. 
However, traditional HGNNs face two main limitations. First, they inherit fundamental drawbacks from their graph-based counterparts: many architectures assume homophily, which is often violated in heterophilic settings, and are prone to oversmoothing, where deep message passing causes node representations to converge and lose discriminative power~\citep{heterophilyhypergraphs,telyatnikov2025hypergraph,chen2022preventingoversmoothinghypergraphneural,nguyen2023revisitingoversmoothingoversquashingusing}. Second, most HGNNs are formulated for undirected hypergraphs, treating hyperedges symmetrically and neglecting orientation even when such hyperedges encode asymmetric or causal relationships such as chemical reactions, metabolic pathways, and causal multi-agent interactions~\citep{D2RE00309K,traversa2023robustnesscomplexitydirectedweighted}.

Sheaf Hypergraph Networks (SHNs)~\citep{duta} address the first of these challenges by extending the principles of SNNs to hypergraphs. They assign vector spaces to nodes and hyperedges and propagate information via learnable restriction maps
naturally mitigating the issues associated with oversmoothing and heterophily while generalizing message passing to higher-order, multi-way interactions, which provides a more expressive framework than traditional HGNNs.

Despite these advantages, SHNs have two key limitations.
$(i)$ They only model hyperedges as undirected, limiting their ability to capture asymmetric directional relationships. Indeed, while directionality has been recently incorporated in GNNs via, e.g., specialized Laplacians such as the one by
\cite{pagerankdirectedgraph} and complex-valued operators~\citep{zhang2021magnetneuralnetworkdirected,fiorini2023sigmanetlaplacianrule}, extensions to hypergraphs remain limited~\citep{fiorini2024let} and often task-specific~\citep{Gatta, dhmconv}. To our knowledge, no SHN methods which can handle directed hypergraphs are known.
%
%
$ii)$ Second (as we show in this paper), the
Laplacian operator proposed in~\cite{duta} fails to satisfy the spectral properties required of a well-defined convolutional operator, such as positive semidefiniteness, contrarily to what the authors report (and claim to have proven) in their paper.

In this paper, we introduce \emph{Directional Sheaf Hypergraph Networks} (\emph{\approach}), a principled extension of SHNs to directed hypergraphs. Specifically, we define \emph{Directed Hypergraph Cellular Sheaves}, equipping hyperedges not only with the notion of tail and head sets (source and target nodes, respectively) but also with \emph{asymmetric} restriction maps that respect orientation within a hyperedge. From this, we derive the \emph{Directed Sheaf Hypergraph Laplacian}, a novel complex-valued Hermitian operator whose phase naturally encodes direction while preserving essential spectral properties, including admitting a spectral decomposition with real-valued, nonnegative eigenvalues. 
We evaluate \approach on \ndatasetstotal real-world datasets, as well as on synthetic benchmarks specifically designed to test \approach's ability to capture directional information within a hypergraph. Compared to \nbaselines state-of-the-art baselines, our method achieves relative accuracy gains from 2\% up to 20\%, demonstrating that explicitly modeling orientation via our proposed asymmetric and complex-valued restriction maps improves predictive performance.
Our contributions can be summarized as follows:
\begin{itemize}
    \item We introduce the concept of \emph{Directed Hypergraph Cellular Sheaves}, a framework that extends directed hypergraphs by providing a principled representation of directional interactions. This is achieved by assigning complex-valued linear maps between nodes and hyperedges, capturing the node-to-hyperedge relationships within each directed hyperedge.
    \item We introduce the \emph{Directed Sheaf Hypergraph Laplacian}, a novel complex-valued Hermitian matrix that satisfies the key properties required of a well-defined spectral operator. Our formulation generalizes existing graph and hypergraph Laplacians, providing a unified framework for learning on hypergraphs with both directed and undirected hyperedges.
    \item We introduce \emph{Directional Sheaf Hypergraph Networks} (\approach), a model that combines Sheaf theory with a principled treatment of directional information, enabling state-of-the-art performance on directed hypergraph benchmarks.\footnote{We provide the code to reproduce the results at \url{https://github.com/EmaMule/DirectionalSheafHypergraphs}.}
\end{itemize}

\section{Background and Previous Work}
\paragraph{Sheaf Neural Networks and Sheaf Hypergraph Networks.}

Sheaf theory provides a principled framework for modeling local information flow across structured domains. In algebraic topology, a sheaf associates data to open sets together with restriction maps ensuring consistency~\citep{curry2014sheavescosheavesapplications}. Cellular sheaves adapt this idea to cell complexes by assigning vector spaces to cells with linear maps along face relations. Building on this, Sheaf Neural Networks (SNNs)~\citep{hansen2020sheafneuralnetworks, bodnar2023neuralsheafdiffusiontopological} assign vector spaces to graph nodes and edges and learn restriction maps for each node-edge incidence relationship, generalizing message passing. SNNs are particularly effective in heterophilic settings and help mitigate oversmoothing, a common drawback of deep GNNs. 
As described by~\cite{discourse}, attaching a cellular sheaf to a graph can be interpreted through the lens of opinion dynamics. Unlike traditional Graph Neural Networks (GNNs), each node's representation ``lives'' in its own vector space, representing a private opinion, while the restriction maps, linear transformations between node and edge vector spaces, govern how this opinion is expressed along each incident edge. In this way, a node can maintain its own distinct representation while still having different ``opinions'' across the various edges they participate in, allowing for more expressive and flexible representations compared to standard spectral-based methods, where adjacent nodes tend to have similar representations.

Expanding on this idea,~\cite{duta} extend SNNs to the hypergraph setting. However, as it will be shown in this work, their formulation suffers from two key limitations. First, it does not capture directionality in hypergraphs. Second, although the \emph{Sheaf Hypergraph Laplacian} is presented as a Laplacian operator, as shown in our paper, it fails to satisfy fundamental spectral properties expected of such operators, most notably positive semidefiniteness.

%
%
{\bf Laplacian matrices for Directed Graphs}\phantom{aa}
Classical spectral methods define convolutional operators through the graph Laplacian \citep{biggs1993algebraic,chebnet,kipf_semi-supervised_2017}. While effective, such methods require to either work on inherently undirected graphs or to symmetrize the graph's adjacency matrix, thereby discarding edge directionality. Drawing inspiration from the Magnetic Laplacian introduced by \citet{lieb1992fluxeslaplacianskasteleynstheorem} in the study of electromagnetic fields, spectral-based methods have been extended to incorporate edge directionality. In particular, \citet{zhang2021magnetneuralnetworkdirected,fiorini2023sigmanetlaplacianrule} developed operators that encode orientation in the imaginary part of complex-valued Hermitian matrices. This construction preserves the desirable spectral properties required for a well-defined convolutional operator while embedding directional information, enabling convolutional operators to faithfully capture the asymmetry of directed graphs.

{\bf Undirected and Directed Hypergraphs} \phantom{aa}
%
The hyperedge weights are stored in the diagonal matrix $W \in \mathbb{R}^{m \times m}$.
The vertex and hyperedge degrees are defined as $\mathbf{D}_u = \sum_{e \in E: u \in e} |w_e|$ for $u \in V$ and $\delta_e = |e|$ for $e \in E$.
Hypergraphs where $\delta_e = k$ for some $k \in \mathbb{N}$ for all $e \in E$ are called $k$-uniform. Graphs are $2$-uniform hypergraphs.
Following~\cite{GALLO1993177}, we define a directed hypergraph as a hypergraph where each edge $e \in E$ is partitioned in a {\em tail set} $T(e)$ and a \textit{head set} $H(e)$. If $H(e)$ is empty, $e$ is an undirected edge.
Research on learning on directed hypergraphs remains limited, with most existing works either constrained to task-specific scenarios~\citep{Luo, Gatta} or restricted to 2-uniform directed hypergraphs~\citep{dhmconv,DHGNN}.
This gap has been recently addressed through the Generalized Directed Laplacian~\citep{fiorini2024let}, a complex Hermitian operator that unifies directed and undirected hypergraphs, and extends several popular methods for directed graphs to the hypergraph domain. However, their method, while being suitable for directed hypergraphs, is still implicitly biased towards homophilic settings and can be prone to oversmoothing.
%

\section{Directed Sheaf Hypergraph Laplacian}\label{sec:directed-sheaf-hypergraph-laplacian}

\subsection{Directed Hypergraph Cellular Sheaf}

In this work, we introduce the notion of \emph{Directed  Hypergraph Cellular Sheaf}, which assigns to a directed hypergraph complex-valued restriction maps designed to capture and encode directional information contained in the hypergraph’s underlying topology.

\begin{definition}
\label{def:directed-cellular-sheaf-hypergraph}
The Directed Hypergraph Cellular Sheaf of a directed hypergraph $\mathcal{H} = (V, E)$ is the tuple
\(
\bigl\langle
\mathbf{\mathcal{S}}^{(q)},
\{\vec{\mathcal{F}}(u)\}_{u\in V},
\{\vec{\mathcal{F}}(e)\}_{e\in E},
\{\vec{\mathcal{F}}_{u\trianglelefteq e}\}_{u\in \Gamma{(e)}}
\bigr\rangle,
\) consisting of: 
\begin{enumerate}
    \item A complex-valued matrix $\mathbf{\mathcal{S}}^{(q)} \in \mathbb{C}^{m \times n}$ with $q \in \mathbb{R}$, defined entry-wise for each hyperedge $e \in E $ and node $u \in V$ as:
    $$
    \mathbf{\mathcal{S}}^{(q)}_{u \trianglelefteq e} = 
    \begin{cases}
    1 & \text{if } u \in H(e) \quad \text{(head set)} \\
    e^{-2\pi \ii q} & \text{if } u \in T(e) \quad \text{(tail set)} \\
    0 & \text{otherwise}
    \end{cases}
    $$
    \item A vector space $\vec{\mathcal{F}}(u) \subseteq\mathbb{C}^d$ associated with each node $u \in V$;
    \item A vector space $\vec{\mathcal{F}}(e) \subseteq \mathbb{C}^d$ associated with each hyperedge $e \in E$;
    \item A restriction map $\vec{\mathcal{F}}_{u \trianglelefteq e}: \vec{\mathcal{F}}(u) \rightarrow \vec{\mathcal{F}}(e)$ with $\vec{\mathcal{F}}_{u \trianglelefteq e} = \mathbf{\mathcal{S}}^{(q)}_{u \trianglelefteq e}  \mathcal{F}_{u \trianglelefteq e}\in \mathbb{C}^{d \times d}$ 
    where $\mathcal{F}_{u \trianglelefteq e} \in \mathbb{R}^{d \times d}$ is a real-valued, directionless, restriction map.
\end{enumerate}
\end{definition} 

The idea is to associate to each node-hyperedge incidence relationship a linear restriction map $\vec{\mathcal{F}}_{v \trianglelefteq e}$ which can either be real- or complex-valued, based on the directional matrix $\mathbf{\mathcal{S}}^{(q)}$, specifying whether a node within a hyperedge belongs to the tail or to the head set. In line with \cite{zhang2021magnetneuralnetworkdirected}, although their work focuses on directed graphs and hence fails to model many-to-many interactions, the parameter $q$ associated with the matrix $\mathbf{\mathcal{S}}^{(q)}$ serves as a \emph{charge parameter} that controls the relevance of the hypergraph's directional information.
In fact, when $q=0$, the restriction maps are all real-valued independently of the hypergraph's directions, and we come back to the definition of a (Hypergraph) Cellular Sheaf as introduced in~\cite{duta}. 
While prior work has incorporated directional information in hypergraphs using complex-valued coefficients, these approaches typically rely on a fixed complex phase~\citep{fiorini2024let}. In contrast, our formulation introduces a tunable complex-valued coefficient, allowing the model to flexibly adjust the contribution of directional information. Moreover, by equipping the topology of the hypergraph with a Cellular Sheaf, our method provides a more expressive framework for representing hypergraph structures than existing directed hypergraph neural networks.
\looseness -1 To visualize of a Directed Hypergraph Cellular Sheaf associated to a directed hyperedge, see \cref{fig:directed-sheaf-hypergraph}.

\begin{figure}[h]
    \centering
    \includegraphics[width=0.9\linewidth]{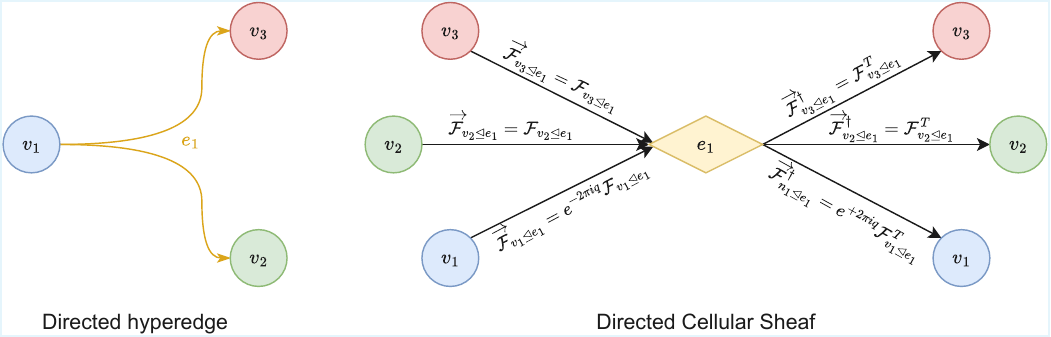}
    \caption{Visualization of sheaves over a directed hyperedge, illustrating the incidence relationship between nodes and the hyperedge, together with the restriction maps \( \vec{\mathcal{F}}_{v \trianglelefteq e} \). The tail node $v_1$ is encoded via the $e^{-2\pi i q}$ coefficient which pre-multiplies the directionless restriction map \( \mathcal{F}_{v \trianglelefteq e} \).}
    \label{fig:directed-sheaf-hypergraph}
\end{figure}

\subsection{Directed Sheaf Hypergraph Laplacian}

Given a directed hypergraph and its corresponding Directed Hypergraph Cellular Sheaf $\vec{\mathcal{F}}$, let $\mathbf{B}^{(q)} \in \mathbb{C}^{md\times nd}$ be a complex-valued incidence matrix which, for each pair $e \in E$ and $u \in V$, reads:
   \begin{equation*}
           \mathbf{B}^{(q)}_{e u} = 
    \begin{cases}
    \vec{\mathcal{F}}_{u \trianglelefteq e} = \mathbf{\mathcal{S}}^{(q)}_{u \trianglelefteq e}\mathcal{F}_{u \trianglelefteq e} & \text{if } u \in e\\
    0 & \text{otherwise}.
    \end{cases}
   \end{equation*}

When factoring in, for each pair $e \in E$ and $u \in e$, whether $u$ belongs to the head or tail set of $e$, we obtain:
    \begin{equation}\label{eq:incidence-expanded}
    \mathbf{B}^{(q)}_{e u} = 
    \begin{cases}
        \vec{\mathcal{F}}_{u \trianglelefteq e} = \mathbf{\mathcal{S}}^{(q)}_{u \trianglelefteq e} \mathcal{F}_{u \trianglelefteq e} = \mathcal{F}_{u \trianglelefteq e} & \text{if } u \in H(e) \quad \text{(head set)} \\
        \vec{\mathcal{F}}_{u \trianglelefteq e} = \mathbf{\mathcal{S}}^{(q)}_{u \trianglelefteq e} \mathcal{F}_{u \trianglelefteq e} =  e^{-2\pi \ii q} \mathcal{F}_{u \trianglelefteq e} & \text{if } u \in T(e) \quad \text{(tail set)} \\
        0 & \text{otherwise}.
    \end{cases}
    \end{equation}

We define the \emph{Directed Sheaf Hypergraph Laplacian} $\Lc$ associated with a Directed Hypergraph Cellular Sheaf as follows:
\begin{equation}\label{eq:laplaciano}
    \Lc :=  \Dvc - \Qc \qquad  \text{with} \qquad \Qc :=  {\mathbf{B}^{(q)}}^\dagger \mathbf{D}_E^{-1} \mathbf{B}^{(q)},
\end{equation}
where $\Qc$ is the \emph{Directed Sheaf Hypergraph Signless Laplacian}. 

In the formula, $\mathbf{D}_E$ is the block-diagonal hyperedge degree matrix $\mathbf{D}_E := \text{diag}(\delta_1 \textbf{I}_d,\dots,\delta_m\textbf{I}_d) \in \mathbb{R}^{md\times md}$, where $\delta_e := |e|$ is the degree of hyperedge $e \in E$, and $\Dvc \in \mathbb{R}^{nd \times nd}$ is the block-diagonal node degree matrix defined as
$\Dvc:= \mathrm{diag}\left(\textbf{D}_{u_1}, \textbf{D}_{u_2}, \dots, \textbf{D}_{u_n}\right),$ with $\textbf{D}_{u_i} := \sum_{e \in E: u \in e}\vec{\mathcal{F}}_{u \trianglelefteq e}^\dagger \, \vec{\mathcal{F}}_{u \trianglelefteq e} \in \mathbb{R}^{d \times d}, u_i \in V$.

To clarify how $\textbf{L}^{\vec{\mathcal F}}$ encodes the hypergraph structure, let us examine its entry corresponding to a pair of vertices $u,v \in V$:
\begin{equation} \label{eq:laplacian-expanded}
    (\mathbf{L}^{\vec{\mathcal{F}}})_{uv} = 
    \left\{
    \begin{array}{lr}
    \displaystyle
    \mathbf{D}_u - \sum_{e: u \in e} \frac{1}{\delta_{e}}{\mathcal{F}}_{u \trianglelefteq e}^{\top}{\mathcal{F}}_{u \trianglelefteq e}
    = 
    \sum_{e: u \in e} (1 - \frac{1}{\delta_{e}}){\mathcal{F}}_{u \trianglelefteq e}^{\top}{\mathcal{F}}_{u \trianglelefteq e} & u = v\\[5pt]
    \displaystyle
    - \sum_{e: u,v \in e} \frac{1}{\delta_{e}}  \vec{\mathcal{F}}_{u \trianglelefteq e}^{\dagger} \vec{\mathcal{F}}_{v \trianglelefteq e}  =
    - \sum_{e: u,v \in e} \frac{1}{\delta_{e}} (\mathbf{\mathcal{S}}^{(q)}_{u \trianglelefteq e})^\dagger (\mathbf{\mathcal{S}}^{(q)}_{v \trianglelefteq e}){\mathcal{F}}_{u \trianglelefteq e}^{\top} \mathcal{F}_{v \trianglelefteq e} & u \neq v.
    \end{array}
    \right.
\end{equation}

Note that the block-diagonal entries of $\mathbf{L}^{\vec{\mathcal{F}}}$ are always real. In contrast, its off-block-diagonal entries are complex-valued when the hypergraph is directed and $q \neq 0$, and real-valued if the hypergraph is undirected. By setting $q=0$, the hypergraph directions are entirely disregarded and $\mathbf{L}^{\vec{\mathcal{F}}}$ coincides with the Laplacian matrix of its undirected counterpart.
The off-diagonal products in $\textbf{L}^{\vec{\mathcal{F}}}$ strongly depend on the interaction between the two components of the directional matrix $\mathbf{\mathcal{S}}^{(q)}$ associated to the restriction maps. The product of these contributes to the real and imaginary parts of $\textbf{L}^{\vec{\mathcal{F}}}$ as follows:
\begin{equation}\label{eq:interaction-types}
(\mathbf{\mathcal{S}}^{(q)}_{v \trianglelefteq e})^\dagger \mathbf{\mathcal{S}}^{(q)}_{u \trianglelefteq e}
=
\begin{cases}
1 & v,u \in T(e) \ \text{(tail-tail)},\\
1 & v,u \in H(e) \ \text{(head-head)},\\
e^{+2\pi i q} & v \in T(e),\ u \in H(e) \ \text{(tail-head)} \quad \text{(equal to $i$ if $q=\frac{1}{4}$)},\\
e^{-2\pi i q} & v \in H(e),\ u \in T(e) \ \text{(head-tail)} \quad \text{(equal to $-i$ if $q=\frac{1}{4}$)}.
\end{cases}
\end{equation}

We can expand \cref{eq:laplacian-expanded} by considering the special case where $q=\frac{1}{4}$. In this case, each entry of $\textbf{L}^{\vec{\mathcal{F}}}$ can be expressed to explicitly highlight the impact of the directionality of each hyperedge:
\begin{eqnarray}\label{eq:laplacian-expanded-special}
(\mathbf{L}^{\vec{\mathcal{F}}})_{uv} =
\left\{
\begin{array}{ll}
\displaystyle
\sum_{e : u \in e}\!\left( 1 - \frac{1}{\delta_e}\right)
\mathcal{F}_{u \trianglelefteq e}^{\top}\mathcal{F}_{u \trianglelefteq e},
& u = v, \\
\displaystyle
-\!\!\!\!\!\!\!\sum_{\begin{subarray}{c}
e \in E \\
u,v \in H(e)\\
\lor  u,v \in T(e)
\end{subarray}}
\!\!\!\!\!
\frac{1}{\delta_e}\,\mathcal{F}_{u \trianglelefteq e}^{\top}\mathcal{F}_{v \trianglelefteq e}
-
i\left(
\sum_{\begin{subarray}{c}
e \in E \\
u\in T(e) \\
\land v \in H(e)
\end{subarray}}\!\!\!\!\!\!\frac{1}{\delta_e}\,
\mathcal{F}_{u \trianglelefteq e}^{\top}\mathcal{F}_{v \trianglelefteq e}
-\!\!\!\!\!\!\sum_{\begin{subarray}{c}
e \in E \\
u \in H(e)\\
\land v \in T(e)
\end{subarray}}\!\!\!\!\!\frac{1}{\delta_e}\,
\mathcal{F}_{u \trianglelefteq e}^{\top}\mathcal{F}_{v \trianglelefteq e}
\right),
& u \neq v.
\end{array}
\right.
\end{eqnarray}

The entry $(\mathbf{L}^{\vec{\mathcal{F}}})_{uv}$ is determined by all hyperedges $e \in E$ that contain both nodes $u$ and $v$. 
From the second case in Eq.~(\ref{eq:laplacian-expanded-special}), whenever both $u$ and $v$ are both heads ($u,v \in H(e)$) or tails ($u,v \in T(e)$), the contribution to the real part $\Re\left((\mathbf{L}^{\vec{\mathcal{F}}})_{uv}\right)$ is negative and given by the opposite of the normalized weight $-\frac{1}{\delta_e}\,\mathcal{F}_{u \trianglelefteq e}^{\top}\mathcal{F}_{v \trianglelefteq e}$. 
In the undirected case, this is the only possible contribution, which matches the expected behavior of undirected hypergraph Sheaf Laplacians, where $u,v \in T(e)$ (or, equivalently, $u,v \in H(e)$). Hyperedges where $u,v$ take opposite roles contribute to the imaginary part with their weight either negatively (if $u \in T(e)$ and $v \in H(e)$) or positively (if $u \in H(e)$ and $v \in T(e)$). 
\looseness -1 Due to this, in the special case where $q = \frac{1}{4}$, $\Im\left((\mathbf{L}^{\vec{\mathcal{F}}})_{uv}\right)$ depends on the net contribution of $u$ and $v$ across all the directed hyperedges that contain them. This is in line with the ``net flow'' behavior observed in directed graphs~\citep{fiorini2023sigmanetlaplacianrule} and hypergraphs~\citep{fiorini2024let}.

When interpreted as a linear operator acting on a complex signal $\mathbf{x} \in \mathbb{C}^{nd}$, $\mathbf{L}^{\vec{\mathcal{F}}}$ reads:
\begin{equation}\label{eq:disagreement-new}
\bigl(\mathbf{L}^{\vec{\mathcal F}}(\mathbf{x})\bigr)_u 
= \sum_{e \in E:\,u \in e}
\frac{1}{\delta_e}\,
\vec{\mathcal F}_{u \trianglelefteq e}^{\dagger}
\sum_{\substack{v \in e: \\ v \neq u}}
\left(
\vec{\mathcal F}_{u \trianglelefteq e}\, \mathbf{x}_u
-
\vec{\mathcal F}_{v \trianglelefteq e}\, \mathbf{x}_v
\right).
\end{equation}
We define the \textit{Normalized Directed Sheaf Hypergraph Laplacian} $\mathbf{L}^{\vec{\mathcal{F}}}_N$ as:
\begin{equation*}
    \mathbf{L}^{\vec{\mathcal{F}}}_N := \Dvc^{-\frac12} \mathbf{L}^{\vec{\mathcal{F}}} \Dvc^{-\frac12}.
\end{equation*}
Using~\cref{eq:laplaciano}, this yields:
\begin{equation}\label{eq:normalized-laplacian}
\mathbf{L}^{\vec{\mathcal{F}}}_N =
\Dvc^{-\frac12} \underbrace{( \Dvc - \mathbf{Q}^{\vec{\mathcal{F}}})}_{\mathbf{L}^{\vec{\mathcal{F}}}}\Dvc^{-\frac12}
=
\textbf{I}_{nd} - \mathbf{Q}^{\vec{\mathcal{F}}}_N, \; \; \text{ where } \; \;
\mathbf{Q}^{\vec{\mathcal{F}}}_N  :=  \Dvc^{-\frac12} {\mathbf{B}^{(q)}}^\dagger \mathbf{D}_E^{-1} \mathbf{B}^{(q)} \Dvc^{-\frac12}.
\end{equation}

\subsection{Spectral Properties}

We now establish that our proposed \emph{Normalized Directed Sheaf Hypergraph Laplacian} $\mathbf{L}^{\vec{\mathcal{F}}}_N$ satisfies all the spectral properties required for a principled convolutional operator. Specifically, we show that $\mathbf{L}^{\vec{\mathcal{F}}}_N$ is diagonalizable, has real, nonnegative eigenvalues, is positive semidefinite, and has a bounded spectrum. These ensure that the Fourier transform is well-defined and that polynomial filters of $\mathbf{L}^{\vec{\mathcal{F}}}_N$ implement \emph{localized, stable convolutions}, in direct analogy with classical spectral-based approaches~\citep{Shuman,kipf_semi-supervised_2017,Defferrard}. Failure to preserve them can result in a complete breakdown of the connection between message passing, the Fourier transform of a (hyper-)graph and its connection with signal theory. The proofs of the claims in this and the next section are provided in \cref{app:theoretical-results}.


We begin by showing that $\mathbf{L}^{\vec{\mathcal{F}}}_N$ admits an eigenvalue decomposition with real eigenvalues:
\begin{restatable}{theorem}{diagonalizable}\label{thm:diagonalizable}
$\mathbf{L}^{\vec{\mathcal{F}}}_N$ is diagonalizable with real eigenvalues.
\end{restatable}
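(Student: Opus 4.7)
The plan is to prove that $L^{\vec{\mathcal F}}_N$ is Hermitian, and then invoke the spectral theorem for Hermitian matrices, which immediately yields both diagonalizability (by a unitary basis) and the reality of the spectrum. This reduces the theorem to a straightforward algebraic verification on the three building blocks of the operator as defined in \cref{eq:normalized-laplacian}: the identity $I_{nd}$, the block-diagonal node-degree matrix $D_V$, and the signless term $Q^{\vec{\mathcal F}} = {B^{(q)}}^\dagger D_E^{-1} B^{(q)}$.

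First I would verify that $D_V$ is a real, symmetric, positive matrix, hence that its inverse square root $D_V^{-\frac12}$ is well defined and real symmetric. This uses the fact that for $u \in e$ the scalar factor satisfies $|\mathcal{S}^{(q)}_{u \trianglelefteq e}| = 1$, so that $\vec{\mathcal F}_{u \trianglelefteq e}^\dagger \vec{\mathcal F}_{u \trianglelefteq e} = \mathcal F_{u \trianglelefteq e}^\top \mathcal F_{u \trianglelefteq e}$, which is a real positive semidefinite block; positive definiteness (and therefore invertibility) follows from the standing non-degeneracy assumption on the restriction maps, inherited from the undirected construction of \citet{duta}.

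Next I would show that $Q^{\vec{\mathcal F}} = {B^{(q)}}^\dagger D_E^{-1} B^{(q)}$ is Hermitian. Since $D_E$ is a real diagonal matrix, $D_E^{-1} = (D_E^{-1})^\dagger$, and consequently
\[
\bigl({B^{(q)}}^\dagger D_E^{-1} B^{(q)}\bigr)^\dagger
= {B^{(q)}}^\dagger (D_E^{-1})^\dagger B^{(q)}
= {B^{(q)}}^\dagger D_E^{-1} B^{(q)}.
\]
Combining this with the previous step, $L^{\vec{\mathcal F}}_N = I_{nd} - D_V^{-\frac12} Q^{\vec{\mathcal F}} D_V^{-\frac12}$ is a sum/difference of Hermitian matrices (the conjugation by the Hermitian matrix $D_V^{-\frac12}$ preserves Hermiticity), hence Hermitian. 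Applying the spectral theorem then concludes the proof.

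I do not anticipate a genuinely hard step here: the theorem is really a soundness check that the complex phases introduced by $\mathcal{S}^{(q)}$ in $B^{(q)}$ cancel in the right places so as not to destroy self-adjointness. The only subtle point worth flagging explicitly is that the phases $e^{\pm 2\pi\ii q}$ enter $Q^{\vec{\mathcal F}}$ only through the sandwich ${B^{(q)}}^\dagger(\cdot) B^{(q)}$, which automatically symmetrizes tail-head and head-tail contributions into complex conjugate pairs, so no special case analysis on $q$ or on the directional assignment within hyperedges is needed.
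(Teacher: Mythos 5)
Your proof is correct and takes essentially the same route as the paper: the paper's own proof is a one-liner observing that $L^{\vec{\mathcal{F}}}_N$ is Hermitian by construction and implicitly invoking the spectral theorem, which is exactly your argument. You simply spell out the verification the paper leaves as ``not hard to see'' (Hermiticity of ${B^{(q)}}^\dagger D_E^{-1} B^{(q)}$, reality and positive definiteness of $D_V$, and preservation of Hermiticity under conjugation by $D_V^{-\frac12}$), which is a welcome level of detail but not a different approach.
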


Next, we derive the formula of the Dirichlet energy function associated with $\mathbf{L}^{\vec{\mathcal{F}}}_N$, which provides a measure of the global smoothness of a signal $x \in \mathbb{C}^{nd}$ across the entire hypergraph:
\begin{restatable}{theorem}{dirichlet}\label{thm:dirichlet}
The Dirichlet energy induced by $\mathbf{L}^{\vec{\mathcal F}}_N$ for a signal $\mathbf{x} \in \mathbb{C}^{nd}$ is:
\[
\mathcal{E}_N(\mathbf{x}) = \mathbf{x}^{\dagger} \mathbf{L}^{\vec{\mathcal{F}}}_N \mathbf{x} = \frac12 \sum_{e\in E}\frac{1}{\delta_e}
\sum_{\substack{u,v\in e:\\ u\neq v}}
\Bigl\| \vec{\mathcal F}_{u \trianglelefteq e}\,\mathbf{D}_u^{-\frac12}\mathbf{x}_u
- \vec{\mathcal F}_{v \trianglelefteq e}\, \mathbf{D}_v^{-\frac12}\mathbf{x}_v \Bigr\|_2^2.
\]
\end{restatable}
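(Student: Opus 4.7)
The plan is to reduce everything to the unnormalized Laplacian via the change of variable $y := D_V^{-1/2} x$, so that $x^{\dagger} L^{\vec{\mathcal F}}_N x = y^{\dagger} L^{\vec{\mathcal F}} y$ with $y_u = D_u^{-1/2} x_u$. Once this is in place, it suffices to prove the identity $y^{\dagger} L^{\vec{\mathcal F}} y = \tfrac{1}{2} \sum_{e \in E} \tfrac{1}{\delta_e} \sum_{u \neq v \in e} \lVert \vec{\mathcal F}_{u \trianglelefteq e} y_u - \vec{\mathcal F}_{v \trianglelefteq e} y_v \rVert_2^2$, since substituting back $y = D_V^{-1/2} x$ yields the claimed formula.

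Next I would compute $y^{\dagger} L^{\vec{\mathcal F}} y$ directly by using the disagreement form of $L^{\vec{\mathcal F}} y$ given in \cref{eq:disagreement-new}. Writing $\langle y, L^{\vec{\mathcal F}} y\rangle = \sum_{u \in V} y_u^{\dagger} (L^{\vec{\mathcal F}} y)_u$ and swapping the order of summation so that the outer sum runs over hyperedges, I obtain
\begin{equation*}
y^{\dagger} L^{\vec{\mathcal F}} y
= \sum_{e \in E} \frac{1}{\delta_e} \sum_{\substack{u,v \in e \\ u \neq v}}
\bigl( \lVert a_{u,e} \rVert_2^2 - a_{u,e}^{\dagger} a_{v,e} \bigr),
\end{equation*}
where I abbreviate $a_{u,e} := \vec{\mathcal F}_{u \trianglelefteq e} y_u$. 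The two terms come from splitting the inner difference $\vec{\mathcal F}_{u \trianglelefteq e} y_u - \vec{\mathcal F}_{v \trianglelefteq e} y_v$ in \cref{eq:disagreement-new}, then pulling the conjugated restriction map $\vec{\mathcal F}_{u \trianglelefteq e}^{\dagger}$ into the inner product.

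The third step is to expand the target right-hand side using $\lVert a_{u,e} - a_{v,e} \rVert_2^2 = \lVert a_{u,e} \rVert_2^2 + \lVert a_{v,e} \rVert_2^2 - a_{u,e}^{\dagger} a_{v,e} - a_{v,e}^{\dagger} a_{u,e}$ and then exploit the symmetry of the double sum over $\{u, v\} \subseteq e$ with $u \neq v$. Relabeling $u \leftrightarrow v$ shows $\sum_{u \neq v} \lVert a_{v,e} \rVert_2^2 = \sum_{u \neq v} \lVert a_{u,e} \rVert_2^2$ and $\sum_{u \neq v} a_{v,e}^{\dagger} a_{u,e} = \sum_{u \neq v} a_{u,e}^{\dagger} a_{v,e}$, so the $\tfrac{1}{2}$ prefactor collapses and the right-hand side matches the expression derived in the second step exactly.

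I do not expect a serious obstacle here: the proof is essentially a careful accounting exercise, and the complex-valued nature of $\vec{\mathcal F}_{u \trianglelefteq e}$ introduces no extra difficulty because the cross terms $a_{u,e}^{\dagger} a_{v,e} + a_{v,e}^{\dagger} a_{u,e} = 2\Re(a_{u,e}^{\dagger} a_{v,e})$ are manifestly real, which is also a useful sanity check that $y^{\dagger} L^{\vec{\mathcal F}} y \in \mathbb{R}$ as expected from \cref{thm:diagonalizable}. The only delicate point is keeping track of ordered versus unordered pairs in the inner sum over $\{u,v\} \subseteq e$; this is precisely what the factor of $\tfrac{1}{2}$ in the statement absorbs.
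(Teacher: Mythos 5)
Your proof is correct and follows essentially the same route as the paper's: both reduce the quadratic form to the disagreement expression of the Laplacian and then combine the diagonal and cross terms via $\|a-b\|^2=\|a\|^2+\|b\|^2-2\Re(a^\dagger b)$ together with the symmetry of the ordered double sum over $u\neq v$. The only cosmetic difference is that you first substitute $y=D_V^{-1/2}x$ (valid since each block $D_u=\sum_e \mathcal F_{u\trianglelefteq e}^{\top}\mathcal F_{u\trianglelefteq e}$ is real symmetric) and work with the unnormalized operator, whereas the paper applies the normalized disagreement form directly.
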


By leveraging the two previous theorems, we show that the spectrum of the Directed Sheaf Hypergraph Laplacian $\mathbf{L}^{\vec{\mathcal{F}}}_N$ only admits (real) non-negative eigenvalues:
\begin{restatable}{corollary}{psd}\label{cor:psd}
$\mathbf{L}^{\vec{\mathcal{F}}}_N$ is positive semidefinite.
\end{restatable}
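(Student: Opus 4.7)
The plan is to derive positive semidefiniteness as an immediate consequence of the two preceding results. First, I would invoke \cref{thm:dirichlet} to rewrite the quadratic form $x^{\dagger} L^{\vec{\mathcal{F}}}_N x$ explicitly as the Dirichlet energy
\[
\mathcal{E}_N(x) \;=\; \frac12 \sum_{e\in E}\frac{1}{\delta_e}
\sum_{\substack{u,v\in e:\\ u\neq v}}
\bigl\| \vec{\mathcal F}_{u \trianglelefteq e}\,D_u^{-\frac12}x_u
- \vec{\mathcal F}_{v \trianglelefteq e}\,D_v^{-\frac12}x_v \bigr\|_2^2.
\]
This expression is a non-negative linear combination (with strictly positive weights $\tfrac{1}{2\delta_e}$) of squared Euclidean norms, hence manifestly non-negative. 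Consequently, $x^{\dagger} L^{\vec{\mathcal{F}}}_N x \geq 0$ for every $x \in \mathbb{C}^{nd}$.

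Next, I would combine this with \cref{thm:diagonalizable}, which ensures $L^{\vec{\mathcal{F}}}_N$ is diagonalizable with real eigenvalues. For any eigenpair $(\lambda, v)$ with $v \neq 0$, I would substitute $x = v$ into the quadratic form to obtain $v^{\dagger} L^{\vec{\mathcal{F}}}_N v = \lambda\, \|v\|_2^2$. Since the left-hand side is non-negative by the previous step and $\|v\|_2^2 > 0$, it follows that $\lambda \geq 0$. Applying this to every eigenvalue in turn yields that the entire spectrum of $L^{\vec{\mathcal{F}}}_N$ is real and non-negative, which is precisely positive semidefiniteness.

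There is essentially no obstacle: the hard technical work has already been absorbed into \cref{thm:dirichlet} (expressing the Hermitian form as a sum of squared norms, which in turn relied on the explicit structure of the incidence matrix $B^{(q)}$ and the degree normalization) and \cref{thm:diagonalizable} (establishing a real spectrum). The only subtle point I would be careful about is that diagonalizability with real eigenvalues alone does not automatically make a complex matrix Hermitian; however, for the PSD conclusion what truly matters is that $x^{\dagger} L^{\vec{\mathcal{F}}}_N x \geq 0$ evaluated at the eigenvectors yields non-negativity of each $\lambda$, which is what the two-step argument above delivers.
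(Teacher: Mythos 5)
Your proposal is correct and follows essentially the same route as the paper: the paper's proof of \cref{cor:psd} simply observes that positive semidefiniteness "follows directly from the previous theorem," i.e.\ from the Dirichlet-energy expression of \cref{thm:dirichlet} as a non-negative sum of squared norms, exactly as you argue. Your additional step passing through the eigenvalues via \cref{thm:diagonalizable} is a harmless (and slightly more explicit) elaboration of the same argument.
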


Finally, we prove that the spectrum of $\mathbf{L}^{\vec{\mathcal{F}}}_N$
is upper bounded by 1:
\begin{restatable}{theorem}{bounded}\label{thm:bounded-spectrum}
$\lambda_{\max}(\mathbf{L}^{\vec{\mathcal{F}}}_N) \leq 1$.  
\end{restatable}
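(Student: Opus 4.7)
The plan is to exploit the factored form of the normalized operator and reduce the spectral bound to a positive semidefiniteness check. Recall from \cref{eq:normalized-laplacian} that
\[
L^{\vec{\mathcal{F}}}_N = I_{nd} - Q^{\vec{\mathcal{F}}}_N, \qquad Q^{\vec{\mathcal{F}}}_N = D_V^{-\frac12}\,{B^{(q)}}^\dagger D_E^{-1} B^{(q)}\,D_V^{-\frac12}.
\]
Since $D_V$ is real, symmetric, positive definite (its blocks are sums of matrices of the form $\vec{\mathcal F}_{u\trianglelefteq e}^{\dagger}\vec{\mathcal F}_{u\trianglelefteq e}$, hence Hermitian PSD, and one can assume them invertible as in the preceding results), the half-power $D_V^{-1/2}$ is well defined and Hermitian. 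A direct computation shows $(Q^{\vec{\mathcal{F}}}_N)^\dagger = Q^{\vec{\mathcal{F}}}_N$, so both $Q^{\vec{\mathcal{F}}}_N$ and $L^{\vec{\mathcal{F}}}_N$ are Hermitian and share a common eigenbasis. Consequently, any eigenvalue $\mu$ of $L^{\vec{\mathcal{F}}}_N$ can be written as $\mu = 1 - \nu$ for an eigenvalue $\nu$ of $Q^{\vec{\mathcal{F}}}_N$, reducing the desired upper bound to the inequality $\nu \ge 0$.

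The main step is therefore to prove $Q^{\vec{\mathcal{F}}}_N \succeq 0$. For this I would set $y := B^{(q)} D_V^{-1/2} x$ for an arbitrary $x \in \mathbb{C}^{nd}$ and write
\[
x^\dagger Q^{\vec{\mathcal{F}}}_N x = y^\dagger D_E^{-1} y = \sum_{e \in E} \frac{1}{\delta_e}\, \|y_e\|_2^2 \;\ge\; 0,
\]
using only that $D_E^{-1}$ is a real diagonal matrix with strictly positive entries $1/\delta_e$. This is simply the observation that $M^\dagger D M$ is Hermitian PSD whenever $D$ is Hermitian PSD, applied to $M = B^{(q)} D_V^{-1/2}$ and $D = D_E^{-1}$. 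Combined with the spectral decomposition from \cref{thm:diagonalizable}, this yields $\nu \ge 0$ for every eigenvalue $\nu$ of $Q^{\vec{\mathcal{F}}}_N$, hence $\mu = 1 - \nu \le 1$ for every eigenvalue $\mu$ of $L^{\vec{\mathcal{F}}}_N$, which is the claim.

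I do not expect a genuine obstacle here: the argument is essentially a one-line Rayleigh quotient estimate once the factorization is spelled out, and it reuses the Hermitian structure already established in \cref{thm:diagonalizable} and \cref{cor:psd}. The only subtlety worth flagging is checking that $D_V$ is invertible so that $D_V^{-1/2}$ is meaningful; this is a mild non-degeneracy assumption on the restriction maps (no node should be isolated with all zero restriction maps), which is implicitly used throughout the section and can be stated explicitly before invoking the bound. Under it, the chain $L^{\vec{\mathcal F}}_N = I - Q^{\vec{\mathcal F}}_N$, $Q^{\vec{\mathcal F}}_N \succeq 0$, Hermitian simultaneous diagonalization, gives $\lambda_{\max}(L^{\vec{\mathcal F}}_N) \le 1$ immediately.
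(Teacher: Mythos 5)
Your proof is correct and follows essentially the same route as the paper: both reduce the bound to showing $Q^{\vec{\mathcal{F}}}_N \succeq 0$ via the factorization through $D_E^{-1/2} B^{(q)} D_V^{-1/2}$ (your quadratic-form computation $x^\dagger Q^{\vec{\mathcal{F}}}_N x = y^\dagger D_E^{-1} y \ge 0$ is the same fact written as a Rayleigh quotient), and then conclude from $L^{\vec{\mathcal{F}}}_N = I_{nd} - Q^{\vec{\mathcal{F}}}_N$. Your explicit flagging of the invertibility of $D_V$ is a welcome precision the paper leaves implicit, but it does not change the argument.
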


\subsection{Generalization Properties}\label{sec:generalization-properties}

Beyond its spectral properties, our \textit{Directed Sheaf Hypergraph Laplacian} provides a unified definition of a Laplacian matrix that recovers and extends several existing Laplacian matrices.

First, we discuss the relationship between $\mathbf{L}^{\vec{\mathcal{F}}}$ and the \textit{Sheaf Laplacian} introduced by \cite{hansen2020sheafneuralnetworks} and highlight its connection with the classical graph Laplacian defined as $\textbf{L}:= \textbf{D} - \textbf{A}$ where $A$ is the graph's adjacency matrix and $D$ its degree matrix (see~\citep{biggs1993algebraic} for a reference) for the undirected case:
\begin{restatable}{theorem}{bodnar}\label{thm:bodnar}
For a $2$-uniform hypergraph without directions, the Laplacian operator $\mathbf{L}^{\vec{\mathcal{F}}}$ reduces to the Sheaf Laplacian \citep{hansen2020sheafneuralnetworks} (up to a scaling factor of 2) and, when considering the case of a trivial Sheaf (where $\mathcal{F}_{u \trianglelefteq e} = 1$), it coincides with the classical graph Laplacian (up to a scaling factor of 2).
\end{restatable}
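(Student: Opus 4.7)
The plan is to derive the result by unwinding the entry-wise formula for $\Lc$ given in~\cref{eq:laplacian-expanded} under the two specializations stated in the theorem, and then compare it term-by-term with the Sheaf Laplacian of~\cite{hansen2020sheafneuralnetworks} and, in the trivial-sheaf case, with the classical combinatorial Laplacian $L = D - A$.

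First, I would encode the hypotheses into the symbolic objects appearing in~\cref{eq:laplacian-expanded}. Two-uniformity gives $\delta_e = 2$ for every $e \in E$, so $1 - \tfrac{1}{\delta_e} = \tfrac{1}{2}$ and $\tfrac{1}{\delta_e} = \tfrac{1}{2}$. The absence of directions means that every node lies in the head set of every hyperedge it belongs to, hence by~\cref{def:directed-cellular-sheaf-hypergraph} we have $\mathcal{S}^{(q)}_{u \trianglelefteq e} = 1$ for all $u \in e$, and consequently $\vec{\mathcal{F}}_{u \trianglelefteq e} = \mathcal{F}_{u \trianglelefteq e}$ is real-valued. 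In particular, the interaction products $(\mathcal{S}^{(q)}_{u \trianglelefteq e})^\dagger \mathcal{S}^{(q)}_{v \trianglelefteq e}$ from~\cref{eq:interaction-types} all collapse to $1$.

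Substituting these simplifications into~\cref{eq:laplacian-expanded} yields, for each pair $u,v \in V$,
\begin{equation*}
(\Lc)_{uv} =
\begin{cases}
\tfrac{1}{2}\sum_{e : u \in e} \mathcal{F}_{u \trianglelefteq e}^\top \mathcal{F}_{u \trianglelefteq e}, & u = v,\\[4pt]
-\tfrac{1}{2}\sum_{e : u,v \in e} \mathcal{F}_{u \trianglelefteq e}^\top \mathcal{F}_{v \trianglelefteq e}, & u \neq v.
\end{cases}
\end{equation*}
I would then recall the explicit block form of the Sheaf Laplacian $L_{\mathcal F}$ of~\cite{hansen2020sheafneuralnetworks}: on a graph, its $(u,u)$ block is $\sum_{e \ni u} \mathcal{F}_{u \trianglelefteq e}^\top \mathcal{F}_{u \trianglelefteq e}$ and its $(u,v)$ block for $u \neq v$ is $-\sum_{e \ni u,v} \mathcal{F}_{u \trianglelefteq e}^\top \mathcal{F}_{v \trianglelefteq e}$. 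Matching entry-wise, the two operators differ by exactly a global factor of $\tfrac{1}{2}$, so $\Lc = \tfrac{1}{2} L_{\mathcal F}$, which is the first claim.

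For the second claim I specialize further to a trivial sheaf, where $d = 1$ and every restriction map is the scalar $1$. The diagonal entry becomes $\tfrac{1}{2}|\{e : u \in e\}| = \tfrac{1}{2} d_u$, the degree of $u$ in the underlying graph, while the off-diagonal entry at $(u,v)$ becomes $-\tfrac{1}{2}$ times the number of edges joining $u$ and $v$, i.e.\ $-\tfrac{1}{2} A_{uv}$. Hence $\Lc = \tfrac{1}{2}(D - A) = \tfrac{1}{2}L$, proving that $\Lc$ coincides with the classical graph Laplacian up to a factor of $2$. The argument is essentially a bookkeeping exercise: the only place where care is needed is in verifying that the factor $1 - \tfrac{1}{\delta_e} = \tfrac{1}{\delta_e}$ arising from $2$-uniformity is responsible for the global scaling $\tfrac{1}{2}$, and that no other direction-dependent contributions survive in the undirected regime, both of which follow directly from the definitions above.
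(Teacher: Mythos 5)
Your proof is correct and follows essentially the same route as the paper's: specialize \cref{eq:laplacian-expanded} with $\delta_e = 2$ and trivial direction factors, then match blocks against the Sheaf Laplacian and, for the trivial sheaf, against $D - A$. One immaterial quibble: the paper's convention (\cref{def:directed-cellular-sheaf-hypergraph} together with the definition of undirected hyperedges) places all nodes of an undirected hyperedge in the \emph{tail} set, so $\mathcal{S}^{(q)}_{u \trianglelefteq e} = e^{-2\pi \ii q}$ rather than $1$; but since the interaction products $(\mathcal{S}^{(q)}_{u \trianglelefteq e})^\dagger \mathcal{S}^{(q)}_{v \trianglelefteq e}$ still collapse to $1$, your conclusion is unaffected.
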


With \cref{thm:magnet}, we show that $\mathbf{L}^{\vec{\mathcal{F}}}$ generalizes several Laplacians designed for directed graphs like the \textit{Magnetic Laplacian} \citep{zhang2021magnetneuralnetworkdirected} and the \textit{Sign-Magnetic Laplacian} \citep{fiorini2023sigmanetlaplacianrule}: 
\begin{restatable}{theorem}{magnet}\label{thm:magnet}
For a directed $2$-uniform hypergraph with unitary edge weights (i.e., $w_e = 1, e \in E$) with directed and undirected edges, $\mathbf{L}^{\vec{\mathcal{F}}}$ recovers, as a special case, the Magnetic Laplacian \citep{zhang2021magnetneuralnetworkdirected} for any $q \in \mathbb{R}$ and the Sign-Magnetic Laplacian \citep{fiorini2023sigmanetlaplacianrule} when $q = \frac{1}{4}$.
\end{restatable}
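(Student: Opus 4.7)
The plan is to proceed by direct entry-wise comparison. First I would specialize the Directed Sheaf Hypergraph Laplacian to the 2-uniform setting where every hyperedge has cardinality $\delta_e = 2$, and, since a scalar-valued Magnetic (or Sign-Magnetic) Laplacian is the target, take trivial restriction maps $\mathcal{F}_{u \trianglelefteq e} = 1$ with stalk dimension $d=1$, so that $L^{\vec{\mathcal{F}}}$ collapses to an $n \times n$ complex matrix whose entries depend only on the scalar phases $\mathcal{S}^{(q)}_{u \trianglelefteq e}$.

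Next I would compute the diagonal and off-diagonal entries explicitly from Eq.~(\ref{eq:laplacian-expanded}). For the diagonal, the factor $(1 - 1/\delta_e) = \tfrac{1}{2}$ together with trivial restriction maps yields $(L^{\vec{\mathcal{F}}})_{uu} = \tfrac12 d_u$, which matches the diagonal $D_s$ of the Magnetic Laplacian defined through the symmetrized adjacency $A_s = \tfrac12(A + A^\top)$ once the same $\tfrac12$ normalization is carried through. For the off-diagonal entries with $u \neq v$, I would partition the hyperedges containing both vertices into three kinds according to Eq.~(\ref{eq:interaction-types}): undirected hyperedges (both endpoints in $T(e)$ under the paper's convention $H(e)=\emptyset$, contributing phase $1$), directed with $u\in T(e),\, v\in H(e)$ (phase $e^{+2\pi \mathbf{i} q}$), and directed with $u\in H(e),\, v\in T(e)$ (phase $e^{-2\pi \mathbf{i} q}$). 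Each contributes $-\tfrac{1}{2}$ times its phase.

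Then I would match these entries with the standard definition $L_M^{(q)} = D_s - A_s \odot \exp(\mathbf{i}\Theta^{(q)})$ of Zhang et al., where $\Theta^{(q)}_{uv} = 2\pi q (A_{uv} - A_{vu})$: the two directed cases on both sides produce conjugate phases $e^{\pm 2\pi \mathbf{i} q}$ on the off-diagonal, and the undirected case produces a real negative weight, so the two operators coincide after the same consistent $\tfrac12$ normalization that the earlier \cref{thm:bodnar} phrases as ``up to a scaling factor of 2''. Specializing to $q = \tfrac{1}{4}$ replaces $e^{\pm 2\pi \mathbf{i} q}$ by $\pm \mathbf{i}$, so the same case analysis recovers the Sign-Magnetic Laplacian of Fiorini et al., which by definition places $\pm \mathbf{i}$ on the off-diagonal of each directed edge and a real coefficient on each undirected one.

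The main obstacle is not a conceptual one but a bookkeeping one: reconciling the hypergraph convention (undirectedness encoded as $H(e)=\emptyset$, so both endpoints lie in $T(e)$ and contribute phase $e^{+2\pi \mathbf{i} q}\cdot e^{-2\pi \mathbf{i} q}=1$) with the classical graph convention (undirectedness encoded as $A_{uv}=A_{vu}=1$), and carrying the $1/\delta_e = 1/2$ factor through uniformly across directed and undirected edges so that the comparison holds simultaneously in all cases. Once the dictionary between the two sets of conventions is fixed, the identification of $L^{\vec{\mathcal{F}}}$ with $L_M^{(q)}$ (and with $L_{SM}$ at $q=1/4$) becomes a mechanical verification.
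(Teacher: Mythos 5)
Your overall strategy (specialize to $d=1$, $\delta_e=2$, read off the phases from \cref{eq:interaction-types}, and compare entry-wise with $D_s - A_s \odot \exp(\mathbf{i}\Theta^{(q)})$) is the same as the paper's, and your phase bookkeeping for the three edge types is correct. However, there is a genuine gap in the step you dismiss as ``mechanical'': with trivial restriction maps $\mathcal{F}_{u \trianglelefteq e}=1$ on \emph{every} edge, the identification fails precisely in the mixed case that the theorem explicitly requires (``containing both directed and undirected edges''). The symmetrized adjacency $A_s = \tfrac12(A+A^\top)$ assigns weight $\tfrac12$ to a directed edge but weight $1$ to an undirected edge (where both $A_{uv}$ and $A_{vu}$ equal $1$), and $D_s$ inherits the same non-uniform weighting on the diagonal. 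Your sheaf Laplacian with trivial maps instead produces a uniform factor $\tfrac{1}{\delta_e}=\tfrac12$ on all off-diagonal entries and $1-\tfrac{1}{\delta_e}=\tfrac12$ on all diagonal contributions. The ratio between the two operators is therefore $1$ on directed edges and $2$ on undirected edges, so no single global rescaling (``the same consistent $\tfrac12$ normalization'') can reconcile them when both edge types are present. Your claimed identity $(L^{\vec{\mathcal{F}}})_{uu} = \tfrac12 d_u = (D_s)_{uu}$ already breaks for a vertex incident to an undirected edge.

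The paper's proof closes exactly this gap by exploiting the freedom in the (learnable) restriction maps: it sets $\mathcal{F}_{u \trianglelefteq e} = \sqrt{2}$ for undirected edges and $\mathcal{F}_{u \trianglelefteq e} = 1$ for directed edges, so that $\tfrac{1}{\delta_e}\,\mathcal{F}_{u \trianglelefteq e}^{\top}\mathcal{F}_{v \trianglelefteq e}$ equals $1$ on undirected edges and $\tfrac12$ on directed ones, matching $H^{(q)}$ and $D_s$ entry for entry with no residual scaling factor at all. This edge-type-dependent choice is the actual content of the proof rather than bookkeeping; to repair your argument you would need to introduce it (or an equivalent device) explicitly. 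The remainder of your argument --- the phase dictionary, the $H(e)=\emptyset$ convention for undirected edges yielding the tail--tail product $e^{+2\pi\mathbf{i}q}e^{-2\pi\mathbf{i}q}=1$, and the specialization $q=\tfrac14$ for the Sign-Magnetic Laplacian --- is sound and agrees with the paper.
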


In the context of hypergraphs, our operator naturally recovers existing hypergraph Laplacians. We begin by showing that it recovers the undirected hypergraph Laplacian of \citet{zhou}:
\begin{restatable}{theorem}{zhou}\label{thm:zhou}
Given a hypergraph $\mathcal{H}$ (directed or undirected), the normalized Directed Hypergraph Laplacian $\mathbf{L}^{\vec{\mathcal{F}}}_N$ recovers, as a special case, the undirected hypergraph Laplacian of \cite{zhou}.
\end{restatable}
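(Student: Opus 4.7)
The plan is to specialize the \emph{Normalized Directed Sheaf Hypergraph Laplacian} $L^{\vec{\mathcal{F}}}_N$ to the simplest sheaf structure on an undirected, unit-weighted hypergraph and verify that the resulting operator coincides, entry for entry, with the normalized Laplacian of \cite{zhou}, defined as $\Delta = I - D_v^{-\frac12} H D_e^{-1} H^\top D_v^{-\frac12}$, where $H \in \{0,1\}^{n \times m}$ is the incidence matrix.

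First I would fix the specialization: take $d=1$, the trivial (scalar) restriction maps $\mathcal{F}_{u \trianglelefteq e}=1$, and set the charge parameter $q=0$. With $q=0$, the entries of $\mathcal{S}^{(q)}$ in Definition~\ref{def:directed-cellular-sheaf-hypergraph} all collapse to $1$ on incidences $u \in e$, irrespective of whether $u$ belongs to $H(e)$ or $T(e)$, since $e^{-2\pi\mathbf{i}\cdot 0}=1$. This is consistent with the fact that the Zhou Laplacian ignores any notion of orientation, and it also handles the ``undirected'' clause of the statement (where $H(e)$ is empty and all nodes are treated uniformly).

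Next I would track how each block in $L^{\vec{\mathcal{F}}}_N = I_{nd} - D_V^{-\frac12} {B^{(q)}}^{\dagger} D_E^{-1} B^{(q)} D_V^{-\frac12}$ simplifies. Under this specialization, $B^{(q)}$ becomes the real, scalar $m \times n$ matrix whose $(e,u)$-entry equals $1$ iff $u \in e$, i.e., $B^{(q)} = H^{\top}$. The hyperedge-degree block matrix $D_E$ reduces to the classical diagonal $m \times m$ matrix with entries $\delta_e = |e|$, and the node-degree block $D_V$ reduces to the diagonal $n \times n$ matrix with $D_u = \sum_{e\colon u\in e} 1$, which is precisely the (unit-weighted) vertex degree. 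Substituting these into~\cref{eq:normalized-laplacian} yields
\begin{equation*}
L^{\vec{\mathcal{F}}}_N = I_n - D_v^{-\frac12} H D_e^{-1} H^{\top} D_v^{-\frac12},
\end{equation*}
which is exactly the Zhou Laplacian in the unit-weight case. To make the recovery explicit, I would also verify entrywise via~\cref{eq:laplacian-expanded}: the diagonal entry at $u$ gives $\sum_{e\colon u\in e}(1-\tfrac{1}{\delta_e})$, while the off-diagonal entry at $(u,v)$ gives $-\sum_{e\colon u,v\in e}\tfrac{1}{\delta_e}$, matching the classical form after the $D_v^{-1/2}$ symmetric normalization.

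The main obstacle is conceptual rather than computational: one must carefully reconcile notational conventions between the two constructions, specifically that the $B^{(q)}$ of this paper has shape $md \times nd$ with rows indexed by hyperedges, whereas Zhou's $H$ is $n \times m$ with rows indexed by nodes, so the roles of $B^{(q)}$ and $H^{\top}$ need to be identified correctly. A secondary point worth noting, but not blocking the argument, is that Zhou's formulation allows an arbitrary hyperedge weight matrix $W$; since the Directed Sheaf Hypergraph Laplacian is not defined with an explicit $W$, the claim is naturally stated for $W = I$, so I would either restrict to unit weights or observe that hyperedge weights can be absorbed into the restriction maps via $\mathcal{F}_{u \trianglelefteq e} \mapsto \sqrt{w_e}\,\mathcal{F}_{u \trianglelefteq e}$ to obtain the fully weighted Zhou operator.
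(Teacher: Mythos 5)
Your proposal is correct and follows essentially the same route as the paper's proof: specialize to the trivial scalar sheaf so that $B^{(q)}$ becomes (the transpose of) Zhou's binary incidence matrix and the degree blocks collapse to the classical vertex and hyperedge degree matrices, then read off $L^{\vec{\mathcal{F}}}_N = I - D_v^{-\frac12} H D_e^{-1} H^{\top} D_v^{-\frac12}$ in the unit-weight case. The only cosmetic difference is that you enforce $q=0$, whereas the paper instead observes that for an undirected hypergraph (all nodes in the tail set) the phase factors cancel in $(\mathcal{S}^{(q)}_{u\trianglelefteq e})^{\dagger}\mathcal{S}^{(q)}_{v\trianglelefteq e}$ for any $q$; both specializations are valid and your entrywise check and remark on absorbing weights into the restriction maps are sound.
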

We show that $\mathbf{L}^{\vec{\mathcal{F}}}$ generalizes the \textit{Generalized Directed Laplacian} proposed by \cite{fiorini2024let}:
\begin{restatable}{theorem}{gedi}\label{thm:gedi}
Given a directed hypergraph $\mathcal{H}$ with unitary weights associated to each hyperedge (i.e., $w_e = 1$), the Normalized Directed Sheaf Hypergraph Laplacian $\mathbf{L}^{\vec{\mathcal{F}}}_N$ recovers, as a special case, the Generalized Directed Laplacian $\vec{\mathbf{L}}_N$ of~\cite{fiorini2024let}.
\end{restatable}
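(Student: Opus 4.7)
The plan is to specialize $L^{\vec{\mathcal{F}}}_N$ to a \emph{trivial sheaf}, where $d = 1$ and $\mathcal{F}_{u \trianglelefteq e} = 1$ for every incidence $u \trianglelefteq e$, and then show entry-wise agreement with the Generalized Directed Laplacian $\vec{L}_N$ of~\cite{fiorini2024let}. Under this specialization, the directed restriction maps collapse to the scalar complex phases $\vec{\mathcal{F}}_{u \trianglelefteq e} = \mathcal{S}^{(q)}_{u \trianglelefteq e}\in \mathbb{C}$, and the block matrices of \cref{sec:directed-sheaf-hypergraph-laplacian} reduce to ordinary complex matrices: $B^{(q)} \in \mathbb{C}^{m \times n}$ becomes a phase-weighted incidence matrix, $D_E = \mathrm{diag}(\delta_1,\dots,\delta_m)$, and, using $|\mathcal{S}^{(q)}_{u \trianglelefteq e}| = 1$ together with the assumption $w_e = 1$, $D_V = \mathrm{diag}(d_1,\dots,d_n)$ is the standard hypergraph vertex-degree matrix.

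Next, I would substitute these scalar reductions into~\cref{eq:normalized-laplacian}, yielding $L^{\vec{\mathcal{F}}}_N = I_n - D_V^{-1/2}(B^{(q)})^{\dagger} D_E^{-1} B^{(q)} D_V^{-1/2}$, and compute its entries. For $u \neq v$, the off-diagonal entry becomes $-\tfrac{1}{\sqrt{d_u d_v}}\sum_{e:\, u,v \in e}\tfrac{1}{\delta_e}\,(\mathcal{S}^{(q)}_{u \trianglelefteq e})^{\dagger}\mathcal{S}^{(q)}_{v \trianglelefteq e}$. By the four-case identity~\cref{eq:interaction-types}, the phase factor takes values in $\{1,\,e^{+2\pi \ii q},\,e^{-2\pi \ii q}\}$ according to whether $u,v$ are concordantly tails or heads, or play opposite roles in $e$; these are precisely the tail/head phase contributions that define $\vec{L}_N$ in~\cite{fiorini2024let}. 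The diagonal entry reduces to $1 - \tfrac{1}{d_u}\sum_{e:\,u \in e}\tfrac{1}{\delta_e}$, which matches the diagonal of $\vec{L}_N$ under the same $\tfrac{1}{\delta_e}$ hyperedge normalization.

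The final step is a direct comparison of the resulting $n \times n$ Hermitian matrix with the explicit formula of $\vec{L}_N$ given in~\cite{fiorini2024let}, which is itself built from a complex-phase-weighted incidence matrix normalized by $D_V^{-1/2}$ and $D_E^{-1}$; the two operators coincide once the trivial sheaf is plugged in. The main obstacle is purely notational: aligning the placement of the conjugate transpose, the $\tfrac{1}{\delta_e}$ hyperedge normalization, and the tail/head convention with those adopted in~\cite{fiorini2024let}. Since both constructions share the same underlying complex-phase incidence machinery, once the trivial-sheaf reduction is performed the identification is structural rather than computational.
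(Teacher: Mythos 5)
Your proposal follows essentially the same route as the paper's proof: specialize to a trivial sheaf so the restriction maps collapse to the scalar phases $\mathcal{S}^{(q)}_{u \trianglelefteq e}$, reduce $D_V$ to the ordinary vertex-degree matrix (using $|\mathcal{S}^{(q)}_{u \trianglelefteq e}|=1$ and $w_e=1$), and compare entries of the resulting $n\times n$ Hermitian matrix with $\vec{L}_N$ via the four-case identity of \cref{eq:interaction-types}. The one point you leave unresolved is the value of the charge parameter: the Generalized Directed Laplacian of \cite{fiorini2024let} is a fixed operator whose tail--head contributions are $\pm\mathbf{i}$, so the recovery requires setting $q=\tfrac14$ (as the paper does explicitly), not a generic $q$ for which the phases are $e^{\pm 2\pi \ii q}$. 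Fixing that choice turns your "structural identification" into an exact entry-wise equality and completes the argument.
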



Crucially, our Laplacian operator does not recover the (linear) Sheaf Hypergraph Laplacian proposed in~\citep{duta}. This difference is intentional: the formulation proposed in their work, defined for any pair of nodes $u,v \in V$ as:
\begin{equation}
(\mathbf{L}^{\mathcal{F}})_{uu}
= \sum_{e:\,u\in e}\frac{1}{\delta_e}\,
\mathcal{F}_{u \trianglelefteq e}^{\top}\mathcal{F}_{u \trianglelefteq e},
\qquad
(\mathbf{L}^{\mathcal{F}})_{uv}
= -\!\sum_{\begin{subarray}{c} e: u,v\in e \\ v \neq u\end{subarray}}
\frac{1}{\delta_e}\,
\mathcal{F}_{u \trianglelefteq e}^{\top}\mathcal{F}_{v \trianglelefteq e},
\end{equation}
fails to satisfy several key spectral properties required to construct a well-defined convolutional operator, most notably positive semidefiniteness. Comparing this definition with \cref{eq:laplacian-expanded}, it is evident that, for undirected hypergraphs or when $q = 0$, the two operators differ in the diagonal term: our formulation introduces the factor $(1 - \tfrac{1}{\delta_e})$ rather than $\tfrac{1}{\delta_e}$. Their definition suffices to guarantee such properties only in the special case of 2-uniform hypergraphs (i.e, standard graphs), however this discrepancy can produce negative eigenvalues for general cases, preventing the matrix from being interpreted as a diffusion operator and precluding the definition of a stable Fourier transform.
Polynomial filters based on the Laplacian of~\citet{duta} therefore cannot guarantee localized and stable convolutions, limiting its general applicability. A more detailed analysis, including an example that demonstrates this drawback, is presented in \cref{app:iulia-sheaf}.

In contrast, when setting \(q=0\) or considering undirected hypergraphs, our operator satisfies all the properties required of a principled Laplacian matrix and, to the best of our knowledge, provides the first definition of a Sheaf Hypergraph Laplacian suitable for undirected hypergraphs.
\section{Directional Sheaf Hypergraph Network} \label{sec:directional-sheaf-hypergraph-networks}

In this section, we describe our proposed {\em Directional Sheaf Hypergraph Network} (\approach).
Inspired by~\cite{hansen2020sheafneuralnetworks}, we define the \emph{sheaf diffusion process} on a directed hypergraph $\mathcal{H}$ as an extension of classical heat diffusion, which plays a central role in spectral-based GNN convolution operators~\citep{kipf_semi-supervised_2017}.
Starting from the differential equation:
$$
\dot{\mathbf{X}}_t = -\,\mathbf{L}^{\vec{\mathcal{F}}}_N \mathbf{X_t}.
$$ 
and applying a unit-step Euler discretization, we obtain:
$$
\mathbf{X_{t+1}} = \mathbf{X_t} - \mathbf{L}^{\vec{\mathcal{F}}}_N \mathbf{X_t} = (\mathbf{I}_{nd} - \mathbf{L}^{\vec{\mathcal{F}}}_N) \mathbf{X_t}.
$$
By introducing learnable parameters and a nonlinear activation function $\sigma$, this discrete diffusion process leads to the following equation of the convolutional layer of \approach:
\begin{equation}\label{eq:diffusion}
\mathbf{X}_{t+1} 
= \sigma\left( (\mathbf{I}_{nd} - \mathbf{L}^{\vec{\mathcal{F}}}_N)\,(\mathbf{I}_n \otimes \mathbf{W_1})\, \mathbf{X}_t\, \mathbf{W}_2 \right) 
= \sigma\left( \mathbf{Q}^{\vec{\mathcal{F}}}_N\,(\mathbf{I}_n \otimes \mathbf{W}_1)\, \mathbf{X}_t\, \mathbf{W}_2 \right)  \in \mathbb{C}^{nd \times f},
\end{equation}
where $\mathbf{W_1} \in \mathbb{R}^{d \times d}$ and $\mathbf{W_2} \in \mathbb{R}^{f \times f}$ are trainable weight matrices, and $\otimes$ denotes the Kronecker product. $\mathbf{X}_0$ is obtained from the input matrix of node features of size $n \times f$, to which a linear projection is applied to produce an $n \times (df)$ matrix, which is then reshaped into a $(nd) \times f$ matrix before applying the diffusion process.
Finally, since the convolutional layer operates in the complex domain, we transform the output of the final convolutional layer into a real-valued representation right before feeding it to the classification head. Following \citet{zhang2021magnetneuralnetworkdirected,fiorini2023sigmanetlaplacianrule}, we do so by applying an \emph{unwind} operation, concatenating the real and imaginary components of the complex features as follows:
\begin{equation*}\label{eq:unwind}
\mathrm{unwind}(\mathbf{X}) = \Re(\mathbf{X})  \,\|\, \Im(\mathbf{X}) \in \mathbb{R}^{n \times 2f}
\end{equation*}

\paragraph{Restriction Maps}

The expressive power of sheaf-theoretical approaches lies in their ability to define a diffusion operator via a learnable $d \times d$ restriction map associated to each node–edge pair.
Following \cite{bodnar2023neuralsheafdiffusiontopological}, we learn each restriction map as a function of the corresponding node and hyperedge features. In particular, for each hyperedge $e \in E$ and each node $u \in e$, each directionless restriction map $\mathcal{F}_{v \trianglelefteq e}$ is parametrized as $\mathcal{F}_{v \trianglelefteq e} = \Phi(\mathbf{x}_v  \,\|\, \mathbf{x}_e) \in \mathbb{R}^{d \times d}$, where $x_u$ is the node feature and $\mathbf{x}_e$ is the hyperedge feature and $\Phi$ is an MLP. If hyperedge features are not explicitly provided, they are computed by aggregating the features of the hyperedge's nodes via a mean or a sum.
Given that the node and the hyperedge features $\mathbf{x}_v$ and $\mathbf{x}_e$ are complex-valued due to \cref{eq:diffusion}, we employ the same \emph{unwind} operation to map them into a form suitable for input to $\Phi$. Activation functions can be either \emph{sigmoid} or \emph{tanh}.

\paragraph{\light}
A key objective in our model design is to balance predictive performance with computational efficiency. Constructing an $nd \times nd$ Laplacian matrix from $d \times d$ restriction maps inherently increases complexity, an issue already noted in the graph setting~\citep{bodnar2023neuralsheafdiffusiontopological}.
To mitigate this, we introduce \light, a variant of \approach that achieves competitive, and in some cases superior, results across several datasets (\cref{tab:results_real_world,tab:results_synthetic}) at a significantly lower computational cost (see \cref{app:computational-complexity}). In it, we detach the gradient computation during the Laplacian's construction: this way, the model continues to rely on the predicted restriction maps, but avoids costly gradient propagation. In \light, the parameters of the MLP responsible for predicting the restriction maps (which are encoded in $\Phi(\cdot)$) remain fixed throughout the training process. The model's adaptability arises from the initial projection layer, which embeds the inputs into a shared feature space where they can be more effectively processed through these parameters (see \cref{app:implementation-details} for a better visualization).
This phenomenon aligns with insights from the literature on overparameterization \citep{arora2019fine} and extreme learning machines \citep{huang2006extreme}, where fixed random projections can still yield strong generalization due to the expressive power of the input embeddings.
Further details on the difference between the two approaches are discussed in \cref{app:implementation-details}.
\paragraph{Computational Complexity}
We provide an estimate of the asymptotic complexity of our model at inference time. Let $n$ denote the number of nodes, $m$ the number of hyperedges, $d$ the stalk dimension, $c$ the product of input and output feature dimensions in the linear transformation, $\bar e$ the average hyperedge size, and $\bar v$ the average number of hyperedges a node participates in. Summing the contributions from the feature transformation, message passing, learning of restriction maps, and Laplacian assembly, the overall complexity is $\mathcal{O}(n(c^{2}+d)+m(\bar e d+\bar e^{2}(d+c)+\bar v c))$ for diagonal maps, and $\mathcal{O}(n(c^{2}+d^{3})+m(\bar e d^{3}+\bar e^{2}(d^{3}+dc)+\bar v d^{2} c))$ for non-diagonal maps, sharing the same asymptotic complexity as SheafHyperGNN by \cite{duta}. For a comprehensive analysis of the contributions leading to this asymptotic complexity, we refer the reader to \cref{app:computational-complexity}.

\section{Experimental Evaluation}
We evaluate \approach and \light against \nbaselines baseline models from both the directed and undirected hypergraph literature on real-world datasets (\cref{subsec:real-world datasets}) as well as synthetic datasets (\cref{subsec:synthetic datasets}) for the node classification task. From the \emph{undirected} hypergraph-learning literature, we include HGNN~\citep{feng2019hypergraph}, HNHN~\citep{HNHN2020}, UniGCNII~\citep{ijcai21-UniGNN}, LEGCN~\citep{LEGCN}, HyperND~\citep{tudisco}, AllDeepSets and AllSetTransformer~\citep{allset}, ED-HNN~\citep{wang2022equivariant}, SheafHyperGNN~\citep{duta}, and PhenomNN~\citep{wang2023hypergraphenergyfunctionshypergraph}. From the \emph{directed} hypergraph-learning literature, we consider GeDi-HNN~\citep{fiorini2024let} and DHGNN~\citep{DHGNN}, along with a variant, as baselines.
Model performance is measured in terms of classification accuracy. Following the standard practice in the literature \citep{allset,wang2022equivariant,fiorini2024let}, we adopt a 50\%/25\%/25\% split for training, validation, and testing, respectively, and, for each model, we report the average test accuracy and the standard deviation over 10 independent runs. Details on the baselines, hyperparameter tuning, and the experimental setup are provided in \cref{app:exp_setup}.

\subsection{Real-World Datasets} \label{subsec:real-world datasets}

To evaluate our models on real-world datasets, we follow the pre-processing procedure introduced by~\cite{tran2022directedhypergraphneuralnetwork} and \cite{fiorini2024let}, and apply it to a suite of publicly available directed graph benchmarks to obtain their directed hypergraph counterparts for performing the node classification task (see \cref{app:transformation}). The considered datasets are:
\texttt{Cora}~\citep{zhang2022hypergraphconvolutionalnetworksequivalency}, \texttt{email-Enron}, \texttt{email-EU}~\citep{benson-simplicial}, \texttt{Telegram}~\citep{bovet-telegram}, \texttt{Chameleon}, \texttt{Squirrel}, and \texttt{Roman-empire}.
Due to space limitations, \cref{tab:results_real_world} includes only the datasets that yield the most interesting insights. Additional results can be found in \cref{tab:results_real_world_complete}, while additional informations on the datasets are provided in \cref{app:datasets-description}.

%

\begin{table*}[ht]
\centering
\caption{Mean accuracy $\pm$ standard deviation on node classification datasets. For each dataset, the best result is shown in \textbf{bold}, and the second best is \second{underlined}.}
\label{tab:results_real_world}
\resizebox{\columnwidth}{!}{
\begin{tabular}{lccccccc}
\toprule & 
\textbf{Roman-empire} & \textbf{Squirrel} & \textbf{email-EU} & \textbf{Telegram} & \textbf{Chameleon} & \textbf{email-Enron} & \textbf{Cora} \\
\midrule
HGNN              & $38.44 \pm 0.44$ & $35.47 \pm 1.44$ & $48.91 \pm 3.11$ & $51.73 \pm 3.38$ & $39.98 \pm 2.28$ & $52.85 \pm 7.27$ & $87.25 \pm 1.01$ \\
HNHN              & $46.07 \pm 1.22$ & $35.62 \pm 1.30$ & $29.68 \pm 1.68$ & $38.22 \pm 6.95$ & $35.81 \pm 3.23$ & $18.64 \pm 6.90$ & $78.16 \pm 0.98$ \\
UniGCNII          & $78.89 \pm 0.51$ & $38.28 \pm 2.56$ & $44.98 \pm 2.69$ & $51.73 \pm 5.05$ & $39.85 \pm 3.19$ & $47.43 \pm 7.47$ & $87.53 \pm 1.06$ \\
LEGCN             & $65.60 \pm 0.41$ & $39.18 \pm 1.54$ & $32.91 \pm 1.83$ & $45.38 \pm 4.23$ & $39.29 \pm 2.04$ & $37.03 \pm 7.16$ & $74.96 \pm 0.94$ \\
HyperND           & $68.31 \pm 0.69$ & $40.13 \pm 1.85$ & $32.79 \pm 2.90$ & $44.62 \pm 5.49$ & $44.95 \pm 3.20$ & $38.11 \pm 7.69$ & $78.48 \pm 1.02$ \\
AllDeepSets       & $81.79 \pm 0.72$ & $40.69 \pm 1.90$ & $37.37 \pm 6.29$ & $49.19 \pm 6.73$ & $42.97 \pm 3.60$ & $37.29 \pm 7.90$ & $86.86 \pm 0.85$ \\
AllSetTransformer & $83.53 \pm 0.64$ & $40.53 \pm 1.33$ & $38.26 \pm 3.57$ & $66.92 \pm 4.36$ & $43.85 \pm 5.42$ & $63.78 \pm 3.66$ & $86.73 \pm 1.13$ \\
ED\mbox{-}HNN     & $83.82 \pm 0.31$ & $39.85 \pm 1.79$ & $68.91 \pm 4.00$ & $60.38 \pm 3.86$ & $44.67 \pm 2.33$ & $51.35 \pm 6.04$ & $86.94 \pm 1.25$ \\
SheafHyperGNN     & $74.50 \pm 0.57$ & $42.01 \pm 1.11$ & $52.78 \pm 9.13$ & $70.00 \pm 5.32$ & $41.06 \pm 4.94$ & $63.51 \pm 5.95$ & $87.15 \pm 0.64$ \\
PhenomNN          & $71.22 \pm 0.45$ & $39.45 \pm 2.19$ & $37.69 \pm 4.40$ & $47.69 \pm 6.59$ & $43.62 \pm 4.29$ & $47.02 \pm 6.75$ & \first{$88.12 \pm 0.86$} \\
\midrule
GeDi\mbox{-}HNN   & \second{$83.87 \pm 0.63$} & $43.02 \pm 3.00$ & $52.31 \pm 2.84$ & $77.12 \pm 4.82$ & $39.29 \pm 2.04$ & $50.54 \pm 5.80$ & $85.16 \pm 0.94$ \\
DHGNN             & $77.58 \pm 0.54$ & $39.85 \pm 1.79$ & $32.35 \pm 2.93$ & $79.62 \pm 5.78$ & $44.08 \pm 4.11$ & $42.16 \pm 8.04$ & $83.16 \pm 1.33$ \\
DHGNN (w/ emb.)   & $22.50 \pm 0.81$ & $40.33 \pm 1.42$ & $55.10 \pm 3.48$ & $80.58 \pm 3.89$ & $40.85 \pm 2.76$ & $58.38 \pm 7.57$ & $73.12 \pm 1.04$ \\
\midrule
\textbf{\approach}         & OOM & \second{$43.55 \pm 2.87$} & \second{$78.62 \pm 2.50$} & \first{$88.65 \pm 5.54$} & \first{$47.02 \pm 4.35$} & \second{$75.68 \pm 3.42$} & $87.84 \pm 0.90$ \\
\textbf{\light}            & \first{$89.24 \pm 0.57$} & \first{$44.09 \pm 2.36$} & \first{$82.67 \pm 1.29$} & \second{$81.15 \pm 4.19$} & \second{$46.50 \pm 4.09$} & \first{$76.76 \pm 2.48$} & \second{$88.02 \pm 1.11$} \\
\bottomrule
\end{tabular}
}
\end{table*}

\approach, and its variant \light, which both leverage the theoretical advantages of associating a Directed Cellular Sheaf to a directed hypergraph, consistently outperform the \nbaselines baselines from both the undirected and directed hypergraph learning literature on \ndatasetsgood out of \ndatasetstotal real-world datasets. The largest relative gains are observed on the \texttt{email-Enron} and \texttt{email-EU} datasets, where \approach and \light improve over the best baseline by up to 20\%. A substantial improvement is also achieved on \texttt{Telegram}, confirming the importance of directional information in this benchmark where all directed methods perform strongly. More moderate but consistent improvements are found on highly heterophilic datasets such as \texttt{Roman-empire}, \texttt{Chameleon} and \texttt{Squirrel} while on highly homophilic datasets such as \texttt{Cora} performance is on par with the strongest baselines. As shown in Table \ref{tab:q_values_table}, the charge parameter $q$ selected by the hyperparameter-selection procedure for our models on highly homophilic datasets is consistently $0.0$. This observation is in line with the findings of \cite{zhang2021magnetneuralnetworkdirected}, who report that, in such settings, directional information behaves as noise for node classification.

A better visualization of the impact of the charge parameter $q$ on the predictive performance of our model can be found in Fig. \ref{fig:influence_q_both}, where we highlight the positive impact of directional information on the \texttt{Telegram} dataset and how direction is detrimental on the \texttt{Cora} dataset. These results not only demonstrate the effectiveness of our models in highly heterophilic settings, but also show how integrating the concept of directionality in hypergraphs can substantially improve performance. Moreover, unlike GeDi-HNN and DHGNN, which are based on Laplacian formulations that embed directionality without any degree of freedom, in our models one can flexibly choose the relevance of directional information by a suitable choice of the charge parameter $q$.
\subsection{Synthetic Datasets} \label{subsec:synthetic datasets}
We additionally evaluate our models on the synthetic datasets introduced by~\cite{fiorini2024let}, built over \(n=500\) nodes and split into \(c=5\) classes. Each class contains 30 random intra-class hyperedges, while inter-class directed hyperedges, consisting of multiple tail and head nodes, are added between class pairs with sizes drawn uniformly from \(\{3,\ldots,10\}\). By varying the number of inter-class hyperedges \(I_o \in \{10,30,50\}\), we control the strength of directional connectivity. This design provides a clean benchmark to test the models’ ability to capture directionality; further details are given in \cref{app:datasets-description}.
\begin{figure*}[t] 
\begin{minipage}[t]{0.50\textwidth}
\captionsetup{type=table}
\caption{Mean accuracy $\pm$ standard deviation \\on the synthetic datasets.}
\small
\scriptsize
\setlength{\tabcolsep}{0.8pt}
\renewcommand{\arraystretch}{1.1}
\label{tab:results_synthetic}
\begin{tabular}{lccc}
\toprule
\textbf{Method} & \textbf{$I_o=10$} & \textbf{$I_o=30$} & \textbf{$I_o=50$} \\
\midrule
HGNN & $47.12 \pm 5.37$ & $43.44 \pm 6.63$ & $37.76 \pm 7.72$ \\
HNHN & $20.40 \pm 2.93$ & $28.88 \pm 9.45$ & $19.76 \pm 3.85$ \\
UniGCNII & $21.44 \pm 4.33$ & $21.12 \pm 2.95$ & $19.84 \pm 2.34$ \\
LEGCN & $17.60 \pm 2.43$ & $20.72 \pm 3.48$ & $19.60 \pm 2.82$ \\
HyperND & $20.40 \pm 2.93$ & $21.12 \pm 3.20$ & $20.64 \pm 1.92$ \\
AllDeepSets & $44.40 \pm 6.81$ & $32.32 \pm 4.82$ & $31.70 \pm 5.92$ \\
AllSetTransformer & $21.12 \pm 3.79$ & $43.68 \pm 8.72$ & $31.84 \pm 3.31$ \\
ED-HNN & $34.00 \pm 6.05$ & $18.88 \pm 2.56$ & $32.48 \pm 6.17$ \\
SheafHyperGNN & $30.64 \pm 5.39$ & $27.28 \pm 7.31$ & $26.00 \pm 9.59$ \\
PhenomNN & $22.24 \pm 4.73$ & $22.08 \pm 4.20$ & $18.72 \pm 3.22$ \\
\midrule
GeDi\mbox{-}HNN & $71.44 \pm 3.14$ & $71.84 \pm 3.31$ & $78.24 \pm 5.64$ \\
DHGNN & $40.72 \pm 4.55$ & $51.68 \pm 3.97$ & $35.76 \pm 3.70$ \\
DHGNN (w/ emb.) & $84.48 \pm 3.22$ & $85.28 \pm 3.32$ & $81.12 \pm 3.22$ \\
\midrule
\textbf{\approach} & \second{$94.96 \pm 1.75$} & \first{$97.84 \pm 1.86$} & \second{$95.84 \pm 2.17$} \\
\textbf{\light} & \first{$95.60 \pm 2.15$} & \second{$ 97.04 \pm 2.79$} & \first{$99.04 \pm 0.86$} \\

\bottomrule
\end{tabular}
\end{minipage}
\begin{minipage}[t]{0.45\textwidth}
\vspace{0pt}
\captionsetup{type=figure}
\caption{Effect of the charge parameter $q$ on Telegram and Cora.} 
\label{fig:influence_q_both}
\centering

\definecolor{mplblue}{HTML}{1F77B4}

\resizebox{.99\linewidth}{!}{
\begin{tikzpicture}
\begin{axis}[
    title={Telegram dataset},
    title style={font=\small, color=black},
    label style={font=\small},
    tick label style={font=\small},
    width=\linewidth,
    height=.36\linewidth,   
    scale only axis,
    clip=true,
    xlabel={Charge $q$},
    ylabel={Accuracy (\%)},
    xmin=0, xmax=0.25,
    ymin=63.46, ymax=88.90,
    ytick={60,65,70,75,80,85},
    xtick={0.00,0.05,0.10,0.15,0.20,0.25},
    xticklabel style={/pgf/number format/fixed,/pgf/number format/precision=2},
    xmajorgrids=true, ymajorgrids=true,
    grid style={line width=.1pt, draw=gray!20},
    major grid style={line width=.2pt, draw=gray!40},
    axis lines=left,
]
\addplot[draw=blue, mark=*, line width=1pt] coordinates {
    (0,63.46) (0.05,67.31) (0.10,76.92)
    (0.15,80.38) (0.20,82.62) (0.25,88.50)
};
\end{axis}
\end{tikzpicture}%
}

\resizebox{.99\linewidth}{!}{%
\begin{tikzpicture}
\begin{axis}[
    title={Cora dataset},
    title style={font=\small, color=black},
    label style={font=\small},
    tick label style={font=\small},
    width=\linewidth,
    height=.36\linewidth,
    scale only axis,
    clip=true,
    xlabel={Charge $q$},
    ylabel={Accuracy (\%)},
    xmin=0, xmax=0.25,
    ymin=83.37, ymax=86.8,
    ytick={84,85,86},
    xtick={0.00,0.05,0.10,0.15,0.20,0.25},
    xticklabel style={/pgf/number format/fixed,/pgf/number format/precision=2},
    xmajorgrids=true, ymajorgrids=true,
    grid style={line width=.1pt, draw=gray!20},
    major grid style={line width=.2pt, draw=gray!40},
    axis lines=left,
]
\addplot[draw=blue, mark=*, line width=1pt] coordinates {
    (0,86.25) (0.05,86.06) (0.10,85.96)
    (0.15,85.63) (0.20,84.74) (0.25,83.37)
};
\end{axis}
\end{tikzpicture}%
}
\end{minipage}
\end{figure*}

The results in Table~\ref{tab:results_synthetic} clearly demonstrate the advantage of our models \approach and \light over existing baselines. Classical undirected hypergraph methods are unable to capture the directional structure that dominate these benchmarks, and as a result their performance is limited. Directed methods such as GeDi-HNN and DHGNN achieve stronger results, confirming the importance of explicitly incorporating directionality into the convolutional process. Yet, \approach and \light, which provide a principled and more expressive treatment of directional structure, yield consistent improvements across all synthetic datasets, outperforming the strongest directed baselines by up to 18 percentage points and reaching 
99.04\% accuracy on the third synthetic dataset---this highlights the expressive power that the notion of Directed Hypergraph Cellular Sheaves unlocks. 
%
%
\section{Conclusion and Future Works}
We introduced the concept of \emph{Directed Hypergraph Cellular Sheaves} for directed hypergraphs and derived the corresponding \emph{Directed Sheaf Hypergraph Laplacian}, which we integrated into our proposed framework \approach. By encoding hyperedge direction via a topology-aware complex-valued inductive bias, our method naturally accommodates both directed and undirected hypergraphs while also unifying and generalizing several operators from the graph and hypergraph learning literature.
Across a broad set of benchmark datasets, \approach consistently outperforms methods from both the directed and undirected hypergraph learning literature.
As future work, a natural step forward is to evaluate our framework on larger and \emph{natively directed hypergraph datasets} such as protein-protein interaction networks to further test the scalability and expressivity of the method, possibly employing Language Models (LMs) to generate features.
Finally, an intriguing direction is to make the \emph{charge parameter} $q$ directly learnable, allowing each layer to adapt its diffusion process dynamically.
\section*{Acknowledgement of Support}
Antonio Purificato, Federico Siciliano and Fabrizio Silvestri acknowledge projects FAIR (PE0000013), under the MUR National Recovery and Resilience Plan funded by the European Union - NextGenerationEU, and project NEREO (Neural Reasoning over Open Data), funded by the Italian Ministry of Education and Research (PRIN) Grant no. 2022AEFHAZ.
Stefano Coniglio's work was partially supported by the European Union under Next Generation EU — the Italian National Recovery and Resilience Plan (PNRR), PRIN 2022 PNRR (project code P20227CTY3, CUP D53D23018800001), project title "HEXAGON: Highly-specialized EXact Algorithms for Grid Operations at the National level".

\section*{Reproducibility Statement}
We provide all the necessary information to facilitate the reproducibility of our results.
Our code repository code can be found \href{https://github.com/EmaMule/DirectionalSheafHypergraphs}{here}. The README contains all that is needed to set up the Python environment and run the experiments with the different configurations. Further details on the Experimental Setup can be found in \cref{app:exp_setup}.

\section*{Ethics Statement}
All datasets employed in this work are publicly available for research and contain no personally identifiable
information or harmful content (see \cref{app:datasets-description} for further details). The methods introduced in this paper have a societal impact comparable to that of other graph neural networks.

\section*{LLM Usage Statement}
All technical content presented in this paper is entirely our own work, with LLMs serving only as an editorial tool. No scientific content or research findings were generated using an LLM.

\bibliography{biblio}
\bibliographystyle{iclr2026_conference}

\clearpage
\appendix

\section{Theoretical Results}\label{app:theoretical-results}

\subsection{Spectral Properties}

The following Lemma (which we state for clarity even if it is not reported in the paper as a lemma) derives the expression of our proposed Laplacian matrix $\mathbf{L}^{\vec{\mathcal{F}}}$ when applied as a linear operator on a signal:
\begin{lemma} \label{app:disagreement}
Let $x \in \mathbb{C}^{nd}$ be a complex-valued signal. Component-wise, the application of $\mathbf{L}^{\vec{\mathcal{F}}}$ to it and to its normalized counterpart reads:
\begin{equation*}
\bigl(\mathbf{L}^{\vec{\mathcal{F}}}(\mathbf{x})\bigr)_u =
\sum_{e:\,u \in e}
\frac{1}{\delta_e}\,
\vec{\mathcal F}_{u \trianglelefteq e}^{\dagger}
\sum_{\substack{v \in e \\ v \neq u}}
\left(
\vec{\mathcal F}_{u \trianglelefteq e}\, \mathbf{x}_u
-
\vec{\mathcal F}_{v \trianglelefteq e}\, \mathbf{x}_v
\right).
\end{equation*}
\[
\bigl(\mathbf{L}^{\vec{\mathcal F}}_N(\mathbf{x})\bigr)_u
=
\sum_{e:\,u\in e}\frac{1}{\delta_e}\,
\Bigl(\mathbf{D}_u^{-\frac12}\vec{\mathcal F}_{u \trianglelefteq e}^{\dagger}\Bigr)
\sum_{\substack{v\in e \\ v\neq u}}
\left(
\vec{\mathcal F}_{u \trianglelefteq e}\,\mathbf{D}_u^{-\frac12}\mathbf{x}_u
-
\vec{\mathcal F}_{v \trianglelefteq e}\,\mathbf{D}_u^{-\frac12}\mathbf{x}_v
\right).
\]
\end{lemma}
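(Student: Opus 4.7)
My plan is a direct block-wise computation using the explicit factorization $L^{\vec{\mathcal F}} = D_V - {B^{(q)}}^{\dagger} D_E^{-1} B^{(q)}$, followed by a change of variable to handle the normalized case.

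First I would compute $(Q^{\vec{\mathcal F}} x)_u$ block-by-block. Since $D_E^{-1}$ is block diagonal with blocks $\tfrac{1}{\delta_e} I_d$, and $B^{(q)}_{eu}$ equals $\vec{\mathcal F}_{u \trianglelefteq e}$ if $u \in e$ and $0$ otherwise, multiplying these three matrices gives
\[
(Q^{\vec{\mathcal F}} x)_u = \sum_{e:\,u \in e} \frac{1}{\delta_e}\, \vec{\mathcal F}_{u \trianglelefteq e}^{\dagger} \sum_{v \in e} \vec{\mathcal F}_{v \trianglelefteq e}\, x_v .
\]
Next I would compute $(D_V x)_u = D_u x_u$ and use the defining identity $D_u = \sum_{e:\,u \in e} \vec{\mathcal F}_{u \trianglelefteq e}^{\dagger} \vec{\mathcal F}_{u \trianglelefteq e}$ to rewrite it in a form compatible with the sum over hyperedges.

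The key algebraic step is to split the inner sum $\sum_{v\in e}$ into the $v=u$ term and the $v\neq u$ terms. The $v=u$ contribution from $Q^{\vec{\mathcal F}}$ cancels against part of $D_u x_u$: concretely, using $\sum_{v \in e, v\neq u} 1 = \delta_e - 1$, one rewrites
\[
D_u x_u - \sum_{e:\,u\in e}\frac{1}{\delta_e}\vec{\mathcal F}_{u \trianglelefteq e}^{\dagger}\vec{\mathcal F}_{u \trianglelefteq e}\, x_u
=
\sum_{e:\,u\in e}\frac{1}{\delta_e}\vec{\mathcal F}_{u \trianglelefteq e}^{\dagger}\!\!\sum_{v\in e, v\neq u}\!\!\vec{\mathcal F}_{u \trianglelefteq e}\, x_u .
\]
Combining this with the $v \neq u$ part of $(Q^{\vec{\mathcal F}} x)_u$ (which appears with a minus sign in $L^{\vec{\mathcal F}} = D_V - Q^{\vec{\mathcal F}}$) yields exactly the claimed telescoped disagreement form.

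For the normalized version, I would simply set $y := D_V^{-1/2} x$, so that $y_u = D_u^{-1/2} x_u$, and apply the unnormalized formula above to $y$. Then premultiplying the resulting block by $D_u^{-1/2}$ (as required by $L^{\vec{\mathcal F}}_N = D_V^{-1/2} L^{\vec{\mathcal F}} D_V^{-1/2}$) gives
\[
\bigl(L^{\vec{\mathcal F}}_N x\bigr)_u = \sum_{e:\,u\in e}\frac{1}{\delta_e}\bigl(D_u^{-1/2}\vec{\mathcal F}_{u \trianglelefteq e}^{\dagger}\bigr)\!\!\sum_{\substack{v\in e\\ v\neq u}}\!\!\bigl(\vec{\mathcal F}_{u \trianglelefteq e} D_u^{-1/2} x_u - \vec{\mathcal F}_{v \trianglelefteq e} D_v^{-1/2} x_v\bigr),
\]
which matches the statement. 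There is no real obstacle here; the only bookkeeping care needed is tracking the block-diagonal structure of $D_V, D_E$ and distributing the $\delta_e - 1$ factor correctly when reabsorbing the diagonal $D_u x_u$ term into the per-hyperedge sum.
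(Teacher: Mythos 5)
Your proposal is correct and follows essentially the same route as the paper: both reduce to the block-wise form of $L^{\vec{\mathcal F}} = D_V - Q^{\vec{\mathcal F}}$ and then reabsorb the diagonal term by writing $\delta_e - 1$ as the sum over $v \in e,\ v \neq u$, which is exactly the paper's telescoping step. Your explicit substitution $y := D_V^{-1/2}x$ for the normalized case is a clean way to make precise what the paper dismisses as ``derived analogously,'' but it is not a different argument.
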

\begin{proof}
We start by applying the definition of the $\mathbf{L}^{\vec{\mathcal{F}}}$ component-wise as in \cref{eq:laplacian-expanded}:
\begin{eqnarray*}
\bigl(\mathbf{L}^{\vec{\mathcal F}}(\mathbf{x})\bigr)_u 
&
=& \sum_{v \in V} (\mathbf{L}^{\vec{\mathcal F}})_{uv} \, \mathbf{x}_v \\ 
&
= & \sum_{e:\,u \in e} 
\left(1 - \frac{1}{\delta_e}\right)
\vec{\mathcal{F}}_{u \trianglelefteq e}^{\dagger} \vec{\mathcal{F}}_{u \trianglelefteq e} \, \mathbf{x}_u
-
\sum_{e:\,u \in e} \sum_{\substack{v \in e \\ v \neq u}} 
\frac{1}{\delta_e}\,
\vec{\mathcal F}_{u \trianglelefteq e}^{\dagger} 
\vec{\mathcal F}_{v \trianglelefteq e}\, \mathbf{x}_v \\ 
&
= & \sum_{e:\,u \in e}
\frac{1}{\delta_e}\,
\left(
(\delta_e-1)\,
\vec{\mathcal F}_{u \trianglelefteq e}^{\dagger} 
\vec{\mathcal F}_{u \trianglelefteq e}\, \mathbf{x}_u
-
\sum_{\substack{v \in e \\ v \neq u}} 
\vec{\mathcal F}_{u \trianglelefteq e}^{\dagger} 
\vec{\mathcal F}_{v \trianglelefteq e}\, \mathbf{x}_v
\right) \\ 
&
=& \sum_{e:\,u \in e}
\frac{1}{\delta_e}\,
\vec{\mathcal F}_{u \trianglelefteq e}^{\dagger}
\left(
(\delta_e-1)\,
\vec{\mathcal F}_{u \trianglelefteq e}\, \mathbf{x}_u
-
\sum_{\substack{v \in e \\ v \neq u}} 
\vec{\mathcal F}_{v \trianglelefteq e}\, \mathbf{x}_v
\right).
\end{eqnarray*}
%
Finally, notice that the coefficient $\delta_e - 1$ is exactly the number of vertices in $e$ different from $u$. Thus, the term $(\delta_e-1)\, \vec{\mathcal F}_{u \trianglelefteq e}\, \mathbf{x}_u$ can be written as a sum of $\vec{\mathcal F}_{u \trianglelefteq e}\, \mathbf{x}_u$ over all $v \in e, v \neq u$. Substituting this back, we obtain:
\begin{equation*}
\sum_{e:\,u \in e}
\frac{1}{\delta_e}\,
\vec{\mathcal F}_{u \trianglelefteq e}^{\dagger}
\left(
\sum_{\substack{v \in e \\ v \neq u}} 
\vec{\mathcal F}_{u \trianglelefteq e}\, \mathbf{x}_u
-
\sum_{\substack{v \in e \\ v \neq u}} 
\vec{\mathcal F}_{v \trianglelefteq e}\, \mathbf{x}_v
\right)
\end{equation*}
\begin{equation*}
= \sum_{e:\,u \in e}
\frac{1}{\delta_e}\,
\vec{\mathcal F}_{u \trianglelefteq e}^{\dagger}
\sum_{\substack{v \in e \\ v \neq u}}
\left(
\vec{\mathcal F}_{u \trianglelefteq e}\, \mathbf{x}_u
-
\vec{\mathcal F}_{v \trianglelefteq e}\, \mathbf{x}_v
\right).
\end{equation*}
The linear expression for the normalized case can be derived analogously.
\end{proof}

In the remainder of this section, we report a proof for each of the theorems we stated in the paper.

\diagonalizable*
\begin{proof}
The claim follows rather directly since, as it is not hard to see, $\mathbf{L}^{\vec{\mathcal F}}_N$ is Hermitian  by construction.
\end{proof}

\dirichlet*
\begin{proof}
By definition of the Dirichlet energy as the quadratic form associated with $\mathbf{L}^{\vec{\mathcal F}}_N$, we have:
\[
\mathcal{E}_N(\mathbf{x}) = \mathbf{x}^\dagger \mathbf{L}^{\vec{\mathcal F}}_N \mathbf{x}
= \sum_{u\in V}\mathbf{x}_u^\dagger \bigl(\mathbf{L}^{\vec{\mathcal F}}_N(\mathbf{x})\bigr)_u.
\]
By substituting for $(\mathbf{L}^{\vec{\mathcal F}}_N (\mathbf{x}))_u$ (see the previous lemma), we have:
\begin{align*}
\mathcal{E}_N(\mathbf{\mathbf{x}})
&= \sum_{u\in V} \sum_{e:\, u\in e}\frac{1}{\delta_e}
   \sum_{\substack{v\in e \\ v\neq u}}
   \bigl(\vec{\mathcal F}_{u \trianglelefteq e}\mathbf{D}_u^{-\tfrac12}\mathbf{x}_u\bigr)^\dagger
   \left(
    \vec{\mathcal F}_{u \trianglelefteq e}\,\mathbf{D}_u^{-\tfrac12}\mathbf{x}_u
    -
    \vec{\mathcal F}_{v \trianglelefteq e}\,\mathbf{D}_v^{-\tfrac12}\mathbf{x}_v
    \right).
\end{align*}
Distributing the product, we obtain:
\begin{align*}
\mathcal{E}_N(\mathbf{x})
&= \sum_{e\in E}\frac{1}{\delta_e}
   \sum_{u\in e}\sum_{\substack{v\in e \\ v\neq u}}
   \bigl(\vec{\mathcal F}_{u \trianglelefteq e}\mathbf{D}_u^{-\tfrac12}\mathbf{x}_u\bigr)^\dagger
   \vec{\mathcal F}_{u \trianglelefteq e}\mathbf{D}_u^{-\tfrac12}\mathbf{x}_u
   -
   \sum_{e\in E}\frac{1}{\delta_e}
   \sum_{\substack{u,v\in e \\ u\neq v}}
   \bigl(\vec{\mathcal F}_{u \trianglelefteq e}\mathbf{D}_u^{-\tfrac12}\mathbf{x}_u\bigr)^\dagger
   \vec{\mathcal F}_{v \trianglelefteq e}\mathbf{D}_v^{-\tfrac12}\mathbf{x}_v \\[4pt]
&= \sum_{e\in E}\frac{1}{\delta_e}
   \left(
   \sum_{u\in e}\sum_{\substack{v\in e \\ v\neq u}}
   \bigl\|\vec{\mathcal F}_{u \trianglelefteq e}\mathbf{D}_u^{-\tfrac12}\mathbf{x}_u\bigr\|_2^2
   - \sum_{\substack{u,v\in e \\ u\neq v}}
   \bigl(\vec{\mathcal F}_{u \trianglelefteq e}\mathbf{D}_u^{-\tfrac12}\mathbf{x}_u\bigr)^\dagger
   \vec{\mathcal F}_{v \trianglelefteq e}\mathbf{D}_v^{-\tfrac12}\mathbf{x}_v
   \right).
\end{align*}
Since $\mathbf{L}^{\vec{\mathcal F}}_N$ is Hermitian, the second inner summation can be rewritten as:
\begin{align*}
   - \sum_{\substack{u,v\in e \\ u\neq v}}
   \bigl(\vec{\mathcal F}_{u \trianglelefteq e}\mathbf{D}_u^{-\tfrac12}\mathbf{x}_u\bigr)^\dagger
   \vec{\mathcal F}_{v \trianglelefteq e}\mathbf{D}_v^{-\tfrac12}\mathbf{x}_v =\\
   - \sum_{\substack{u,v\in e \\ u < v}}
   \left(
   (\vec{\mathcal F}_{u \trianglelefteq e}\mathbf{D}_u^{-\tfrac12}\mathbf{x}_u\bigr)^\dagger
   \vec{\mathcal F}_{v \trianglelefteq e}\mathbf{D}_v^{-\tfrac12}\mathbf{x}_v
   +\bigl(\vec{\mathcal F}_{v \trianglelefteq e}\mathbf{D}_v^{-\tfrac12}\mathbf{x}_v\bigr)^\dagger \vec{\mathcal F}_{u \trianglelefteq e}\mathbf{D}_u^{-\tfrac12}\mathbf{x}_u
   \right) =\\
   - \sum_{\substack{u,v\in e \\ u < v}}
   2\,\Re\left[
   \bigl(\vec{\mathcal F}_{u \trianglelefteq e}\mathbf{D}_u^{-\tfrac12}\mathbf{x}_u\bigr)^\dagger
   \vec{\mathcal F}_{v \trianglelefteq e}\mathbf{D}_v^{-\tfrac12}\mathbf{x}_v\right] =\\
   - \sum_{\substack{u,v\in e \\ u \neq v}}
   \Re\left[
   \bigl(\vec{\mathcal F}_{u \trianglelefteq e}\mathbf{D}_u^{-\tfrac12}\mathbf{x}_u\bigr)^\dagger
   \vec{\mathcal F}_{v \trianglelefteq e}\mathbf{D}_v^{-\tfrac12}\mathbf{x}_v\right].
\end{align*}
Substituting back and doubling both terms of the summation, we obtain:
\begin{align*}
\mathcal{E}_N(\mathbf{x})
&= \frac12\sum_{e\in E}\frac{1}{\delta_e}
   \sum_{\substack{u,v\in e \\ u\neq v}}
   \Bigl(
   \|\vec{\mathcal F}_{u \trianglelefteq e}\mathbf{D}_u^{-\tfrac12}\mathbf{x}_u\|_2^2
   + \|\vec{\mathcal F}_{v \trianglelefteq e}\mathbf{D}_v^{-\tfrac12}\mathbf{x}_v\|_2^2
   - 2\,\Re\left[
   \bigl(\vec{\mathcal F}_{u \trianglelefteq e}\mathbf{D}_u^{-\tfrac12}\mathbf{x}_u\bigr)^\dagger
   \vec{\mathcal F}_{v \trianglelefteq e}\mathbf{D}_v^{-\tfrac12}\mathbf{x}_v\right]
   \Bigr).
\end{align*}
Thanks to the identity $\|a-b\|^2 = \|a\|^2+\|b\|^2 - 2\Re(a^\dagger b)$, we conclude:
\[
\mathcal{E}_N(\mathbf{x})
= \frac12\sum_{e\in E}\frac{1}{\delta_e}
   \sum_{\substack{u,v\in e \\ u\neq v}}
   \Bigl\|\vec{\mathcal F}_{u \trianglelefteq e}\mathbf{D}_u^{-\tfrac12}\mathbf{x}_u
   - \vec{\mathcal F}_{v \trianglelefteq e}\mathbf{D}_v^{-\tfrac12}\mathbf{x}_v\Bigr\|_2^2.
\]
Notice that the constraint $u \neq v$ can be dropped from the inner summation w.l.o.g..
\end{proof}

\psd*
\begin{proof}
This follows directly from the previous theorem.
\end{proof}

\bounded*
\begin{proof}
By definition, we have $\mathbf{L}^{\vec{\mathcal{F}}}_N := \mathbf{I}_{nd} - \mathbf{Q}^{\vec{\mathcal{F}}}_N$, with $\mathbf{Q}^{\vec{\mathcal{F}}}_N  :=  \Dvc^{-\frac12} {\mathbf{B}^{(q)}}^\dagger \mathbf{D}_E^{-1} \mathbf{B}^{(q)} \Dvc^{-\frac12}$.

$\mathbf{Q}^{\vec{\mathcal{F}}}_N$ can be factored as
$$
\mathbf{Q}^{\vec{\mathcal{F}}}_N =
\left(\Dvc^{-\frac12} {\mathbf{B}^{(q)}}^\dagger \mathbf{D}_E^{-\frac12} \right)  \left(\mathbf{D}_E^{-\frac12} \mathbf{B}^{(q)} \Dvc^{-\frac12}\right) = 
\left(\mathbf{D}_E^{-\frac12} \mathbf{B}^{(q)} \Dvc^{-\frac12} \right)^\dagger  \left(\mathbf{D}_E^{-\frac12} \mathbf{B}^{(q)} \Dvc^{-\frac12}\right).
$$
It follows that
$$
\mathbf{x}^\dagger \mathbf{Q}^{\vec{\mathcal{F}}}_N \mathbf{x} =
||\mathbf{x}^\dagger\mathbf{D}_E^{-\frac12} \mathbf{B}^{(q)} \mathbf{D}_V^{-\frac12}\mathbf{x}||^2 \geq 0,
$$
which implies that its spectrum is nonnegative.

Since $\mathbf{L}^{\vec{\mathcal{F}}}_N := \mathbf{I}_{nd} - \mathbf{Q}^{\vec{\mathcal{F}}}_N$, it follows that the spectrum of $\mathbf{L}^{\vec{\mathcal{F}}}_N$ is upper-bounded by 1, which concludes the proof.
\end{proof}

\subsection{Generalization Properties}

\bodnar*
\begin{proof}
In the $2$-uniform case, every hyperedge $e$ contains exactly two nodes (i.e., $\delta_e = 2$).
Consider the general expression of the unnormalized Laplacian given in \cref{eq:laplacian-expanded}. Since the graph has no directions, $\mathcal{S}^{(0)}_{u \trianglelefteq e} = 1$ for all $u \in V, e \in E$, and for any choice of the charge parameter $q$.
As a result, the off-diagonal terms of $\mathbf{L}^{\vec{\mathcal{F}}}$ are real-valued (the diagonal ones always are).

In particular, when $\delta_e = 2$ for all $e \in E$,  $\mathbf{L}^{\vec{\mathcal{F}}}$ reads:
\begin{equation*} 
(\mathbf{L}^{\vec{\mathcal{F}}})_{uv} = 
\begin{cases}
\displaystyle
\tfrac{1}{2}\sum_{e:\,u\in e}
\mathcal{F}_{u \trianglelefteq e}^{\top}
\mathcal{F}_{u \trianglelefteq e}
\in \mathbb{R}^{d \times d}, & u = v, \\[14pt]
\displaystyle
-\tfrac{1}{2}\,
\mathcal{F}_{u \trianglelefteq e}^{\top}
\mathcal{F}_{v \trianglelefteq e}
\in \mathbb{R}^{d \times d}, & u \neq v, \\[6pt]
\end{cases}
\end{equation*}

Thus, $\mathbf{L}^{\vec{\mathcal{F}}}$ precisely coincides with the Sheaf Laplacian of~\cite{hansen2020sheafneuralnetworks} up to the multiplicative constant $\tfrac{1}{2}$.

When considering the case of a trivial Sheaf (i.e., when $\mathcal{F}_{v \trianglelefteq e} = 1$), $\mathbf{L}^{\vec{\mathcal{F}}}$ coincides with the definition of the classical graph Laplacian $\mathbf{L} = \mathbf{D} - \mathbf{A}$, where $A$ is the adjacency matrix and $\mathbf{D}$ is the node degree matrix.

Let us note that, in both cases, this constant factor is immaterial in practice, as it can be absorbed by the learnable parameters of the associated neural model.
\end{proof}

\magnet*
\begin{proof}
The \emph{Magnetic Laplacian} proposed by~\cite{zhang2021magnetneuralnetworkdirected} is defined as
\[
\mathbf{L}^{(q)} \coloneqq \mathbf{D}_s - \mathbf{H}^{(q)} 
= \mathbf{D}_s - \mathbf{A}_s \odot \exp\!\bigl(i\,\mathbf{\Theta}^{(q)}\bigr),
\]
where $\mathbf{\Theta}^{(q)}$ denotes the phase matrix defined as
\[
\mathbf{\Theta}^{(q)} \coloneq 2\pi q\,(\mathbf{A} - \mathbf{A}^\top)
\]
and $\mathbf{A}_s$ is the symmetrized adjacency matrix defined as
$$
\mathbf{A}_s := \tfrac{1}{2}(\mathbf{A}+\mathbf{A}^\top)
$$
and $\mathbf{D}_s$ is a diagonal matrix defined as
$$
(\mathbf{D}_s)_{uu} := \sum_{v\in V} (\mathbf{A}_s)_{uv} \text{ for all $u \in V$}.
$$
Entry-wise, $\mathbf{H}^{(q)}$ can be written as:
\[
\mathbf{H}^{(q)}_{uv} =
\begin{cases}
\tfrac{1}{2} e^{\,2\pi i q} & (u,v) \in E \\[3pt]
\tfrac{1}{2} e^{-\,2\pi i q} & (v,u) \in E \\[3pt]
1 & \{u,v\} \in E\\[3pt]
0 & \text{otherwise}.
\end{cases}
\]
In the directed, 2-uniform case, every hyperedge $e$ contains exactly two nodes ($\delta_e = 2$). For every $e \in E$, the product $(\mathbf{\mathcal{S}}^{(q)}_{u\trianglelefteq e})^{\dagger} \mathbf{\mathcal{S}}^{(q)}_{v\trianglelefteq e}$ can take one of the following three values:
\begin{enumerate}
\item Undirected edge $e = \{u,v\}$:
\[
\mathbf{\mathcal{S}}^{(q)}_{u\trianglelefteq e}=\mathbf{\mathcal{S}}^{(q)}_{v\trianglelefteq e}=1
\quad\Longrightarrow\quad
\big(\mathbf{\mathcal{S}}^{(q)}_{u\trianglelefteq e}\big)^{\dagger}
\mathbf{\mathcal{S}}^{(q)}_{v\trianglelefteq e}=1.
\]

\item Directed edge $e= (u,v)$:
\[
\mathbf{\mathcal{S}}^{(q)}_{u\trianglelefteq e}=e^{-2 \pi i q},\qquad
\mathbf{\mathcal{S}}^{(q)}_{v\trianglelefteq e}=1
\quad\Longrightarrow\quad
\big(\mathbf{\mathcal{S}}^{(q)}_{u\trianglelefteq e}\big)^{\dagger}
\mathbf{\mathcal{S}}^{(q)}_{v\trianglelefteq e}=e^{+2\pi i q}.
\]

\item Directed edge $e = (v,u)$:
\[
\mathbf{\mathcal{S}}^{(q)}_{u\trianglelefteq e}=1,\qquad
\mathbf{\mathcal{S}}^{(q)}_{v\trianglelefteq e}=e^{-2\pi i q}
\quad\Longrightarrow\quad
\big(\mathbf{\mathcal{S}}^{(q)}_{u\trianglelefteq e}\big)^{\dagger}
\mathbf{\mathcal{S}}^{(q)}_{v\trianglelefteq e}=e^{-2 \pi i q}.
\]
\end{enumerate}

Letting (w.l.o.g., as the restriction maps are learnable)
\[
\begin{cases}
\mathcal{F}_{v \trianglelefteq e} := \sqrt{2}, \quad \mathcal{F}_{u \trianglelefteq e} := \sqrt{2} & \text{if } e = \{u,v\}, \\[6pt]
\mathcal{F}_{v \trianglelefteq e} := 1, \quad \mathcal{F}_{u \trianglelefteq e} = 1 & \text{if } e = (u,v) \ \text{or}\ e = (v,u).
\end{cases}
\]
we have:
\[
(\mathbf{Q}^{\vec{\mathcal{F}}})_{uv} =
\begin{cases}
\begin{aligned}[t]
\frac{1}{2}\,
\vec{\mathcal{F}}_{u\trianglelefteq e}^{\dagger}\,
\vec{\mathcal{F}}_{v\trianglelefteq e}
&= \frac{1}{2}\,
   e^{+\,2\pi i q}\,
   \mathcal{F}_{u\trianglelefteq e}^{\top}\mathcal{F}_{v\trianglelefteq e} = \frac{1}{2}e^{+\,2\pi i q},
\end{aligned}
& \text{if } e=(u,v), \\[12pt]
\begin{aligned}[t]
\frac{1}{2}\,
\vec{\mathcal{F}}_{u\trianglelefteq e}^{\dagger}\,
\vec{\mathcal{F}}_{v\trianglelefteq e}
&= \frac{1}{2}\,
   e^{-\,2\pi i q}\,
   \mathcal{F}_{u\trianglelefteq e}^{\top}\mathcal{F}_{v\trianglelefteq e} = \frac{1}{2}\,
   e^{-\,2\pi i q},
\end{aligned}
& \text{if } e=(v,u), \\[12pt]
\begin{aligned}[t]
\frac{1}{2}\,
\vec{\mathcal{F}}_{u\trianglelefteq e}^{\dagger}\,
\vec{\mathcal{F}}_{v\trianglelefteq e}
&= \frac{1}{2}\,
   \mathcal{F}_{u\trianglelefteq e}^{\top}\mathcal{F}_{v\trianglelefteq e} = 1,
\end{aligned}
& \text{if } e=\{u,v\}.
\end{cases}
\]

Hence, by construction, we have:
\[
\mathbf{Q}^{\vec{\mathcal{F}}} ={\mathbf{B}^{(q)}}^{\dagger}\, \mathbf{D}_E^{-1}\, \mathbf{B}^{(q)} \;=\; \mathbf{H}^{(q)}, \text{ with } \Dvc \;=\; \mathbf{D}_s.
\]
This implies:
\[
\Lc =  \mathbf{D}_s - \mathbf{H}^{(q)} = \mathbf{L}^{(q)}.
\]

Lastly, noticing that, by construction, the Sign-Magnetic Laplacian proposed in \cite{fiorini2023sigmanetlaplacianrule} coincides with the Magnetic Laplacian when $q = \frac{1}{4}$, we conclude that our operator also generalizes the former.
\end{proof}

\zhou*
\begin{proof}  
In the unit-weight case, the Laplacian matrix proposed by \cite{zhou} for undirected hypergraphs is defined as follows:
$$
\mathbf{\Delta} := \mathbf{I} - \mathbf{Q}_N \qquad \text{ with } \mathbf{Q}_N := \Dvc^{-\frac12} \mathbf{B} \mathbf{D}_E^{-1} \mathbf{B}^\top \Dvc^{-\frac12}.
$$
Since any undirected hypergraph be regarded as a special case of a directed hypergraph in which every hyperedge consists solely of tail nodes (or, equivalently, solely of head nodes), as shown in \cref{eq:interaction-types}, in our proposed Laplacian matrix $\Lc_N$ each product of two restriction maps reduces to a real weight of $1$, therefore contributing only to the real part of the operator.
In particular, for a trivial sheaf where $\mathcal{F}_{v \trianglelefteq e} = 1$, the incidence matrix $\mathbf{B}^{(q)}$ in \cref{eq:incidence-expanded} reduces to the transpose of binary incidence matrix $B$ of \cite{zhou}.
\end{proof}

\gedi*
\begin{proof}
Let's consider a special case of a trivial sheaf (i.e. $\mathcal{F}_{u \trianglelefteq e} = 1$). By setting $q = \frac{1}{4}$ we have:
\[
    S^{(0.25)}_{u \trianglelefteq e} = 
    \begin{cases}
    1 & \text{if } u \in H(e) \quad \text{(head set)} \\
    -i & \text{if } u \in T(e) \quad \text{(tail set)} \\
    0 & \text{otherwise}
    \end{cases}
\]
Now, for each pair $u,v$ belonging to the same hyperedge $e$:
\[
\vec{\mathcal{F}}_{u \trianglelefteq e}^\dagger \vec{\mathcal{F}}_{v \trianglelefteq e}
=\big(S^{(0.25)}_{u\trianglelefteq e}\big)^{\dagger}S^{(0.25)}_{v\trianglelefteq e}\,
\]
Whose contribution, according to the four cases in \cref{eq:interaction-types}, is given by: 
\[
\vec{\mathcal{F}}_{u \trianglelefteq e}^\dagger \vec{\mathcal{F}}_{v\trianglelefteq e}
=
\begin{cases}
1, & u,v \in H(e),\\
1, & u,v \in T(e),\\
i , & u\in T(e),\ v\in H(e),\\
-i , & u\in H(e),\ v\in T(e).
\end{cases}
\]
Our Normalized Directed Sheaf Hypergraph Laplacian $\mathbf{L}^{\vec{\mathcal{F}}}_N$, component-wise reads:
\[
(\mathbf{L}^{\vec{\mathcal{F}}}_N)_{uv} = 
    \left\{
    \begin{array}{lr}
    \displaystyle

    \mathbf{I}_{d} - \mathbf{D}_u^{-1}\sum_{e: u \in e} \frac{1}{\delta_{e}}{\mathcal{F}}_{u \trianglelefteq e}^{\top}{\mathcal{F}}_{u \trianglelefteq e} 
    & u = v
    
    \\
    
    \displaystyle
    - \mathbf{D}_u^{-\frac{1}{2}} \big (\sum_{e: u,v \in e} \frac{1}{\delta_{e}} \vec{\mathcal{F}}_{u \trianglelefteq e}^{\dagger} \vec{\mathcal{F}}_{v \trianglelefteq e} \big ) \mathbf{D}_v^{-\frac{1}{2}} & u \neq v.
    \end{array} 
    \right.
\]

Which reduces, in the considered scalar special case to:
\begin{eqnarray}
(\mathbf{L}^{\vec{\mathcal{F}}}_N)_{uv} =
\left\{
\begin{array}{ll}
\displaystyle
1 - \sum_{e:\,u\in e}\frac{1}{\mathbf{D}_u\,\delta_e},
& u = v, \\[6pt]
\displaystyle
-\!\!\!\!\!\sum_{\begin{subarray}{c}
e \in E\\
u,v \in H(e)\\
\lor\; u,v \in T(e)
\end{subarray}}
\!\!\!\!\!
\frac{1}{\delta_e}
-
i\left(
\sum_{\begin{subarray}{c}
e \in E\\
u \in T(e)\\
\land v \in H(e)
\end{subarray}}
\!\!\frac{1}{\delta_e}
-\!\!\!\!
\sum_{\begin{subarray}{c}
e \in E\\
u \in H(e)\\
\land v \in T(e)
\end{subarray}}
\!\!\frac{1}{\delta_e}
\right)
\frac{1}{\sqrt{\mathbf{D}_u}\sqrt{\mathbf{D}_v}},
& u \neq v.
\end{array}
\right.
\end{eqnarray}

Such an expression coincides with the definition of the Generalized Directed Laplacian when considering $\mathbf{W}=\mathbf{I}$.
\end{proof}

\section{Extended experimental evaluation}

In this section, we include further experiments and details that did not make the cut in the main paper due to space limits.
This includes:
\begin{itemize}
    \item The optimal value of the charge parameter $q$ found for \approach and \light during the hyperparameters optimization process.
    \item The impact of the stalk dimension $d$ and the number of layers on the method's performance.
    \item The complete results on 12 real-world datasets.
\end{itemize}

\subsection{Impact of charge parameter}
The charge parameter $q$ controls how much each directed hyperedge contributes to the \emph{real} and \emph{imaginary} parts of the Directed Sheaf Hypergraph Laplacian. Larger values of $q$ place more directional information in the imaginary component, whereas smaller values reduce the directional contribution, emphasizing orientation-agnostic interactions in the real part. Because the dataset differ in how informative directionality is, the optimal $q$ is inherently data-dependent.
In practice, a careful tuning of it is needed $q$ to select the value that yields the best performance, allowing either a partial or a full contribution of directional information to be encoded as needed.
\cref{tab:q_values_table} reports the values chosen by our hyperparameter tuning procedure. As one can seen, for most datasets the hyperparameter tuning procedure sets a relatively high importance to directional information for each dataset, particularly for \texttt{Telegram}, \texttt{Roman-empire} and Synthetic datasets.

\begin{table*}[ht]
\caption{Optimal $q$ values for \approach and \light across all real-world and synthetic datasets found by hyperparameter tuning.}
\label{tab:q_values_table}
\centering
\resizebox{\linewidth}{!}{
\begin{tabular}{lcccccc}
\toprule
\textbf{Method} & \textbf{Roman-empire} & \textbf{Squirrel} & \textbf{email-EU} & \textbf{Telegram} & \textbf{Chameleon} & \textbf{email-Enron} \\
\midrule
\approach & ---  & 0.05 & 0.25 & 0.25 & 0.20 & 0.05 \\
\light    & 0.20 & 0.05 & 0.20 & 0.20 & 0.15 & 0.15 \\
\bottomrule
\end{tabular}
}
\resizebox{\linewidth}{!}{
\begin{tabular}{lcccccc}
\toprule
\textbf{Method} & \textbf{Cornell} & \textbf{Wisconsin} & \textbf{Amazon-ratings} & \textbf{Texas} & \textbf{Citeseer} & \textbf{Cora} \\
\midrule
\approach & 0.25 & 0.25 & ---  & 0.25 & 0.00 & 0.00 \\
\light    & 0.15 & 0.25 & 0.00 & 0.15 & 0.00 & 0.00 \\
\bottomrule
\end{tabular}
}
\begin{tabular}{lccc}
\toprule
\textbf{Method} & \textbf{$I_o=10$} & \textbf{$I_o=30$} & \textbf{$I_o=50$} \\
\midrule
\approach & 0.25 & 0.10 & 0.10 \\
\light    & 0.10 & 0.10 & 0.10 \\
\bottomrule
\end{tabular}
\end{table*}

\subsection{Impact of stalk dimension and number of layers}
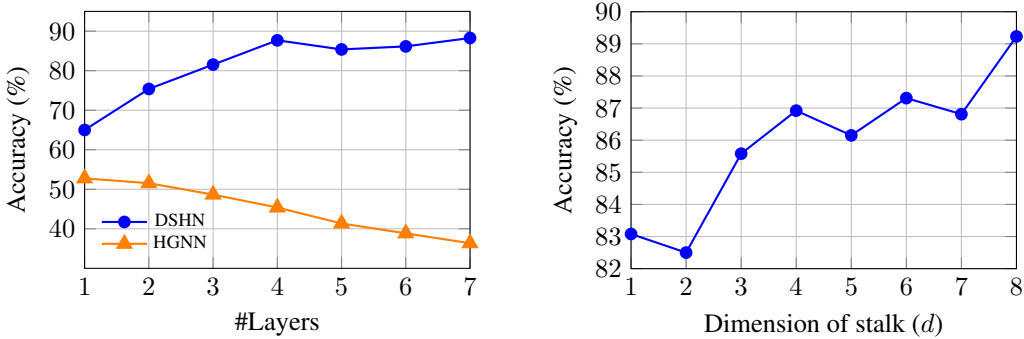
\begin{figure}[h]
    \centering
    \begin{subfigure}{0.48\linewidth}
        \centering
        \begin{tikzpicture}
        \begin{axis}[
            width=\linewidth,
            height=5cm,
            xlabel={\#Layers},
            ylabel={Accuracy (\%)},
            ymin=30, ymax=95,
            xmin=1, xmax=7,
            xtick={1,...,7},
            ytick={40,50,60,70,80,90},
            legend style={
                at={(0.02,0.02)}, 
                anchor=south west,
                draw=none,
                fill=none,
                font=\scriptsize,
                row sep=-2pt
            },
            grid=major,
        ]
        \addplot[blue, thick, mark=*, mark size =2pt] coordinates {
            (1,65.00) (2,75.38) (3,81.54) (4,87.69) (5,85.38) (6,86.15) (7,88.27)
        };
        \addlegendentry{\approach}
        
        \addplot[orange, thick, mark=triangle*, mark size=3pt] coordinates {
            (1,52.76) (2,51.54) (3,48.65) (4,45.38) (5,41.35) (6,38.85) (7,36.35)
        };
        \addlegendentry{HGNN}
        
        \end{axis}
        \end{tikzpicture}
        \caption{Accuracy of \approach and HGNN as the number of layers increases.}
        \label{fig:influence_num_layers}
    \end{subfigure}
    \hfill
    \begin{subfigure}{0.48\linewidth}
        \centering
        \begin{tikzpicture}
        \begin{axis}[
            width=\linewidth,
            height=5cm,
            xlabel={Dimension of stalk ($d$)},
            ylabel={Accuracy (\%)},
            ymin=82, ymax=90,
            xmin=1, xmax=8,
            xtick={1,...,8},
            ytick={82,83,84,85,86,87,88,89,90},
            grid=major,
        ]
        \addplot[blue, thick, mark=*] coordinates {
            (1,83.08) (2,82.50) (3,85.58) (4,86.92) (5,86.15) (6,87.31) (7,86.81) (8,89.23)
        };
        
        \end{axis}
        \end{tikzpicture}
        \caption{Accuracy of \approach as the stalk dimension $d$ increases.}
        \label{fig:influence_stalk_dim}
    \end{subfigure}
    
    \caption{Influence of architectural parameters on accuracy. 
    (a) Effect of the number of layers on \approach and HGNN. 
    (b) Effect of stalk dimension $d$ on \approach.}
    \label{fig:influence_parameters}
\end{figure}

As noted in \cref{sec:introduction}, standard HGNNs are prone to \emph{oversmoothing}: as network depth increases, node representations become indistinguishable and accuracy degrades. In \cref{fig:influence_parameters}, we study how depth and the stalk dimension \(d\) affect the accuracy of \approach.
\approach shows no signs of oversmoothing, as accuracy \emph{improves} as we add layers. Performance also increases with a higher stalk dimension \(d\), underscoring the additional expressive power associated to cellular sheaves. This stands in clear contrast to HGNN, whose accuracy steadily deteriorates with depth. This is in line with the observations in \cite{bodnar2023neuralsheafdiffusiontopological} for graphs: leveraging our Directed Sheaf Hypergraph Laplacian, built with \(d\times d\) restriction maps to transport features between nodes and hyperedges, enriches local variability rather than collapsing it. By projecting node features onto hyperedges (and back), the model retains discriminative power across neighborhoods.
\subsection{Extended Results}
\label{app:extended_results}
%
In this subsection, we report the complete table of results for this work, which were not reported in the main just due to space limitations. As can be observed from \cref{tab:results_real_world_complete}, \approach, and \light consistently outperform the baselines taken from both the directed and undirected hypergraph learning literature on 10 out of 12 considered real-world datasets. 

\begin{table*}[ht]
\centering
\caption{Mean accuracy and standard deviation on node classification datasets (test accuracy $\pm$ std). For each dataset, the best result is shown in \textbf{bold}, and the second-best is \second{underlined}.}
\label{tab:results_real_world_complete}
\resizebox{\linewidth}{!}{
\begin{tabular}{lcccccc}
\toprule
\textbf{Method} & 
\textbf{Roman-empire} & \textbf{Squirrel} & \textbf{email-EU} & \textbf{Telegram} & \textbf{Chameleon} & \textbf{email-Enron} \\
\midrule
HGNN              & $38.44 \pm 0.44$ & $35.47 \pm 1.44$ & $48.91 \pm 3.11$ & $51.73 \pm 3.38$ & $39.98 \pm 2.28$ & $52.85 \pm 7.27$ \\
HNHN              & $46.07 \pm 1.22$ & $35.62 \pm 1.30$ & $29.68 \pm 1.68$ & $38.22 \pm 6.95$ & $35.81 \pm 3.23$ & $18.64 \pm 6.90$ \\
UniGCNII          & $78.89 \pm 0.51$ & $38.28 \pm 2.56$ & $44.98 \pm 2.69$ & $51.73 \pm 5.05$ & $39.85 \pm 3.19$ & $47.43 \pm 7.47$ \\
LEGCN             & $65.60 \pm 0.41$ & $39.18 \pm 1.54$ & $32.91 \pm 1.83$ & $45.38 \pm 4.23$ & $39.29 \pm 2.04$ & $37.03 \pm 7.16$ \\
HyperND           & $68.31 \pm 0.69$ & $40.13 \pm 1.85$ & $32.79 \pm 2.90$ & $44.62 \pm 5.49$ & $44.95 \pm 3.20$ & $38.11 \pm 7.69$ \\
AllDeepSets       & $81.79 \pm 0.72$ & $40.69 \pm 1.90$ & $37.37 \pm 6.29$ & $49.19 \pm 6.73$ & $42.97 \pm 3.60$ & $37.29 \pm 7.90$ \\
AllSetTransformer & $83.53 \pm 0.64$ & $40.53 \pm 1.33$ & $38.26 \pm 3.57$ & $66.92 \pm 4.36$ & $43.85 \pm 5.42$ & $63.78 \pm 3.66$ \\
ED\mbox{-}HNN     & $83.82 \pm 0.31$ & $39.85 \pm 1.79$ & $68.91 \pm 4.00$ & $60.38 \pm 3.86$ & $44.67 \pm 2.33$ & $51.35 \pm 6.04$ \\
SheafHyperGNN     & $74.50 \pm 0.57$ & $42.01 \pm 1.11$ & $52.78 \pm 9.13$ & $70.00 \pm 5.32$ & $41.06 \pm 4.94$ & $63.51 \pm 5.95$ \\
PhenomNN          & $71.22 \pm 0.45$ & $39.45 \pm 2.19$ & $37.69 \pm 4.40$ & $47.69 \pm 6.59$ & $43.62 \pm 4.29$ & $47.02 \pm 6.75$ \\
\midrule
GeDi\mbox{-}HNN   & \second{$83.87 \pm 0.63$} & $43.02 \pm 3.00$ & $52.31 \pm 2.84$ & $77.12 \pm 4.82$ & $39.29 \pm 2.04$ & $50.54 \pm 5.80$ \\
DHGNN             & $77.58 \pm 0.54$ & $39.85 \pm 1.79$ & $32.35 \pm 2.93$ & $79.62 \pm 5.78$ & $44.08 \pm 4.11$ & $42.16 \pm 8.04$ \\
DHGNN (w/ emb.)   & $22.50 \pm 0.81$ & $40.33 \pm 1.42$ & $55.10 \pm 3.48$ & $80.58 \pm 3.89$ & $40.85 \pm 2.76$ & $58.38 \pm 7.57$ \\
\midrule
\textbf{\approach}         & OOM              & \second{$43.55 \pm 2.87$} & \second{$78.62 \pm  2.50$} & \first{$88.65 \pm 5.54$} & \first{$47.02 \pm 4.35$} & \second{$75.68 \pm 3.42$} \\
\textbf{\light}    & \first{$89.24 \pm 0.57$} & \first{$44.09 \pm 2.36$} & \first{$82.67 \pm 1.29$} & \second{$81.15 \pm 4.19$} & \second{$46.50 \pm 4.09$} & \first{$76.76 \pm 2.48$} \\

\bottomrule
\end{tabular}
}
\resizebox{\linewidth}{!}{
\begin{tabular}{lcccccc}
\toprule
\textbf{Method} & \textbf{Cornell} & \textbf{Wisconsin} & \textbf{Amazon-ratings} & \textbf{Texas} & \textbf{Citeseer} & \textbf{Cora} \\
\midrule
HGNN              & $43.51 \pm 6.44$ & $51.56 \pm 6.68$ & $46.20 \pm 0.45$ & $52.77 \pm 7.48$ & $76.02 \pm 0.81$ & $87.25 \pm 1.01$ \\
HNHN              & $43.51 \pm 6.09$ & $49.60 \pm 4.96$ & $42.29 \pm 0.34$ & $58.11 \pm 3.87$ & $71.24 \pm 0.66$ & $78.16 \pm 0.98$ \\
UniGCNII          & $73.24 \pm 5.19$ & $86.86 \pm 4.30$ & $49.12 \pm 0.46$ & $81.35 \pm 5.33$ & $77.30 \pm 1.15$ & $87.53 \pm 1.06$ \\
LEGCN             & $75.14 \pm 5.51$ & $84.71 \pm 4.00$ & $47.02 \pm 0.59$ & $81.35 \pm 4.26$ & $72.62 \pm 1.09$ & $74.96 \pm 0.94$ \\
HyperND           & $75.14 \pm 5.38$ & $86.67 \pm 5.02$ & $47.33 \pm 0.51$ & \second{$83.51 \pm 5.19$} & $75.21 \pm 1.37$ & $78.48 \pm 1.02$ \\
AllDeepSets       & $77.83 \pm 3.78$ & \second{$87.84 \pm 3.69$} & $51.91 \pm 0.68$ & $82.76 \pm 5.74$ & $75.78 \pm 0.94$ & $86.86 \pm 0.85$ \\
AllSetTransformer & $75.94 \pm 2.97$ & $86.27 \pm 3.92$ & $52.28 \pm 0.67$ & $82.76 \pm 5.07$ & $75.61 \pm 1.44$ & $86.73 \pm 1.13$ \\
ED\mbox{-}HNN     & $76.49 \pm 4.53$ & $85.09 \pm 4.89$ & $51.58 \pm 0.53$ & $80.00 \pm 5.05$ & $74.95 \pm 1.27$ & $86.94 \pm 1.25$ \\
SheafHyperGNN     & $74.59 \pm 4.39$ & $85.29 \pm 4.74$ & $48.90 \pm 0.59$ & $80.00 \pm 2.48$ & $77.21 \pm 1.44$ & $87.15 \pm 0.64$ \\
PhenomNN          & $72.16 \pm 4.19$ & $80.58 \pm 6.10$ & $48.81 \pm 0.37$ & $81.49 \pm 4.95$ & $77.21 \pm 1.32$ & \first{$88.12 \pm 0.86$} \\
\midrule
GeDi\mbox{-}HNN   & \second{$78.37 \pm 3.19$} & $87.45 \pm 3.41$ & $49.30 \pm 0.52$ & $82.55 \pm 4.64$ & $75.94 \pm 0.95$ & $85.16 \pm 0.94$ \\
DHGNN             & $77.30 \pm 4.05$ & $87.45 \pm 3.84$& \second{$52.48 \pm 0.50$} & $83.24 \pm 5.64$ & $74.67 \pm 1.24$ & $83.16 \pm 1.33$ \\
DHGNN (w/ emb.)   & $51.08 \pm 4.43$ & $59.80 \pm 5.63$ & \first{$53.64 \pm 0.52$} & $63.51 \pm 9.84$ & $56.78 \pm 1.32$ & $73.12 \pm 1.04$ \\
\midrule
\textbf{\approach}        & \first{$79.19 \pm 4.37$} & \first{$88.63 \pm 3.49$} & OOM & \first{$83.78 \pm 5.13$} & \second{$77.39 \pm 1.04$} & $87.84 \pm 0.90$ \\
\textbf{\light}   & \first{$79.19 \pm 3.20$} & $87.25 \pm  4.90$ & $50.94 \pm 0.68$ & $82.43 \pm 5.44$ & \first{$77.45 \pm 0.74$} & \second{$88.02 \pm 1.11$}\\
\bottomrule
\end{tabular} }
\end{table*}
%
Additionally, we evaluate our method on two real-world directed hypergraph dataset for molecular reaction reframed as a hyperedge classification task, results are provided in \cref{tab:f1_scores}. These datasets are the result of the merging of data from different sources such as \cite{ORD, reizman2016suzuki, LugoMartinez2021Hypergraphlet} and are built inspired by \cite{Restrepo2024Spaces}, which proposes a novel way of modeling molecular reactions through directed hypergraphs. Dataset-1 contains $100{,}523$ nodes and $50{,}016$ hyperedges, with a total of 10 classes. Dataset-2 contains 956 nodes and $3{,}021$ hyperedges to classify among 6 different classes. These datasets consist of inherently directional hyperedges as they contain the molecular reactions expressed as set of reagents (the tail set) and set of products (the head set) composing a molecular reaction. The nodes' features are built based on Morgan Fingerprints \citep{rogers2010extended}, which are one of the most widely used molecular descriptors. We employ the F1-score metric since the data has an imbalanced amount of samples for each class as shown in \cref{app:hyperedge}.

As shown in \cref{tab:f1_scores}, \approach consistently outperforms all competing methods from both the undirected and directed hypergraph learning literature. On Molecular-1, it achieves an F1-score of 82.32\%, improving upon the strongest baseline, GeDi-HNN, by 1.98\%. On Molecular-2, \approach attains 89.09\%, exceeding AllSetTransformer by a relative margin of 1.37\%.

\begin{table*}[h]
\centering
\caption{Mean F1-score and standard deviation for hyperedge classification on two molecular reaction datasets (test F1-score $\pm$ std). The best score is shown in \textbf{bold}, and the second-best is \underline{underlined}.}
\label{tab:f1_scores}
\resizebox{0.5\textwidth}{!}{
\begin{tabular}{lcc}
\toprule
\textbf{Method} & \textbf{Molecular-1} & \textbf{Molecular-2} \\
\midrule
HGNN                & 69.38 $\pm$ 0.48       & 81.40 $\pm$ 2.68 \\
HNHN                & 32.27 $\pm$ 1.30       & 45.69 $\pm$ 7.48 \\
UniGCNII            & 72.00 $\pm$ 0.59       & 85.61 $\pm$ 2.63 \\
LEGCN               & OOM                     & 84.75 $\pm$ 2.68 \\
HyperND             & 44.16 $\pm$ 1.27       & 82.86 $\pm$ 3.17 \\
AllDeepSets         & 79.17 $\pm$ 0.53       & 85.78 $\pm$ 3.01 \\
AllSetTransformer   & 79.24 $\pm$ 1.08       & \underline{87.89 $\pm$ 2.87} \\
ED\text{-}HNN       & 66.37 $\pm$ 2.62       & 87.05 $\pm$ 1.96 \\
SheafHyperGNN       & 57.99 $\pm$ 2.75       & 80.25 $\pm$ 2.20 \\
PhenomNN            & 47.71 $\pm$ 2.90       & 86.27 $\pm$ 2.40 \\
\midrule
GeDi\text{-}HNN     & \underline{80.72 $\pm$ 0.78} & 85.64 $\pm$ 2.42 \\
DHGNN               & OOM                     & 85.93 $\pm$ 3.49 \\
\midrule
\approach           & OOM                     & \textbf{89.09 $\pm$ 3.08} \\
\light              & \textbf{82.32 $\pm$ 0.56}        & 86.52 $\pm$ 2.68 \\
\bottomrule
\end{tabular}
}
\end{table*}

\section{Implementation details} \label{app:implementation-details}
We provide additional details regarding the implementation of our models, with a particular emphasis on the computational complexity of \approach and \light and the architectural choices that contribute to their stability and expressiveness. 

\subsection{Computational Complexity}\label{app:computational-complexity}

\paragraph{Comparison between \approach and \light}

\cref{tab:flops} presents a comparative analysis of \approach and \light, across various datasets, measuring their performance in terms of average FLOPS per epoch and average step time. The results are averaged over 10 runs. Over all the 12 datasets, \light always appears to be more efficient, consistently requiring fewer computational resources while maintaining faster processing times. By applying the aforementioned detachment operation through backpropagation, \light achieves similar and sometimes better results, as can be seen from \cref{tab:results_real_world}.

\begin{table*}[h]
\centering
\caption{\approach vs \light -- FLOPS and Step Time (in ms) Analysis Across Different Datasets (Mean $\pm$ Standard Deviation)}
\label{tab:flops}
\resizebox{\textwidth}{!}{%
\begin{tabular}{lcc|cc}
\toprule
& \multicolumn{2}{c}{Avg FLOPs/epoch$(\downarrow)$} & \multicolumn{2}{c}{Avg step time $(\downarrow)$} \\
\midrule
Dataset & \approach & \light & \approach & \light \\
\midrule
Cora & 267,070,765,386 $\pm$ 0 & 196,828,716,921 $\pm$ 3,250 & 2635.02 $\pm$ 112.51 & 973.18 $\pm$ 164.18 \\
Citeseer & 415,705,637,192 $\pm$ 0 & 310,747,699,339 $\pm$ 5,239 & 2631.83 $\pm$ 146.54 & 958.34 $\pm$ 159.00 \\
email-Enron & 962,025,184 $\pm$ 172 & 696,069,022 $\pm$ 364 & 2559.10 $\pm$ 115.97 & 932.18 $\pm$ 152.09 \\
email-EU & 35,930,593,693 $\pm$ 1,176 & 25,798,032,763 $\pm$ 1,183 & 4170.54 $\pm$ 105.64 & 1018.13 $\pm$ 155.71 \\
Telegram & 2,628,033,910 $\pm$ 0 & 1,858,200,422 $\pm$ 0 & 2702.33 $\pm$ 150.89 & 965.40 $\pm$ 161.12 \\
Cornell & 2,201,584,340 $\pm$ 220 & 1,851,871,460 $\pm$ 220 & 2467.84 $\pm$ 132.16 & 886.07 $\pm$ 164.73 \\
Texas & 2,228,554,459 $\pm$ 0 & 1,876,832,187 $\pm$ 0 & 2480.09 $\pm$ 130.18 & 888.02 $\pm$ 164.15 \\
Wisconsin & 3,684,554,183 $\pm$ 201 & 3,035,436,853 $\pm$ 454 & 2547.95 $\pm$ 116.24 & 923.18 $\pm$ 161.33 \\
Chameleon & 34,986,115,734 $\pm$ 123 & 27,033,570,342 $\pm$ 123 & 2629.45 $\pm$ 132.52 & 959.08 $\pm$ 155.75 \\
Squirrel & 189,607,210,489 $\pm$ 5,694 & 140,531,198,787 $\pm$ 3,557 & 3870.93 $\pm$ 119.79 & 1046.00 $\pm$ 169.09\\
Roman-empire & OOM & 12,898,147,996,391 $\pm$ 43,606 & OOM & 1050.30 $\pm$ 152.04  \\
Amazon-ratings & OOM & 15,061,770,374,298 $\pm$ 0 & OOM & 1080.26 $\pm$ 159.91 \\
\bottomrule
\end{tabular}%
}
\end{table*}

\paragraph{Comparison between \approach and other models}

\begin{figure}[h]
    \centering
    \includegraphics[width=0.6\linewidth]{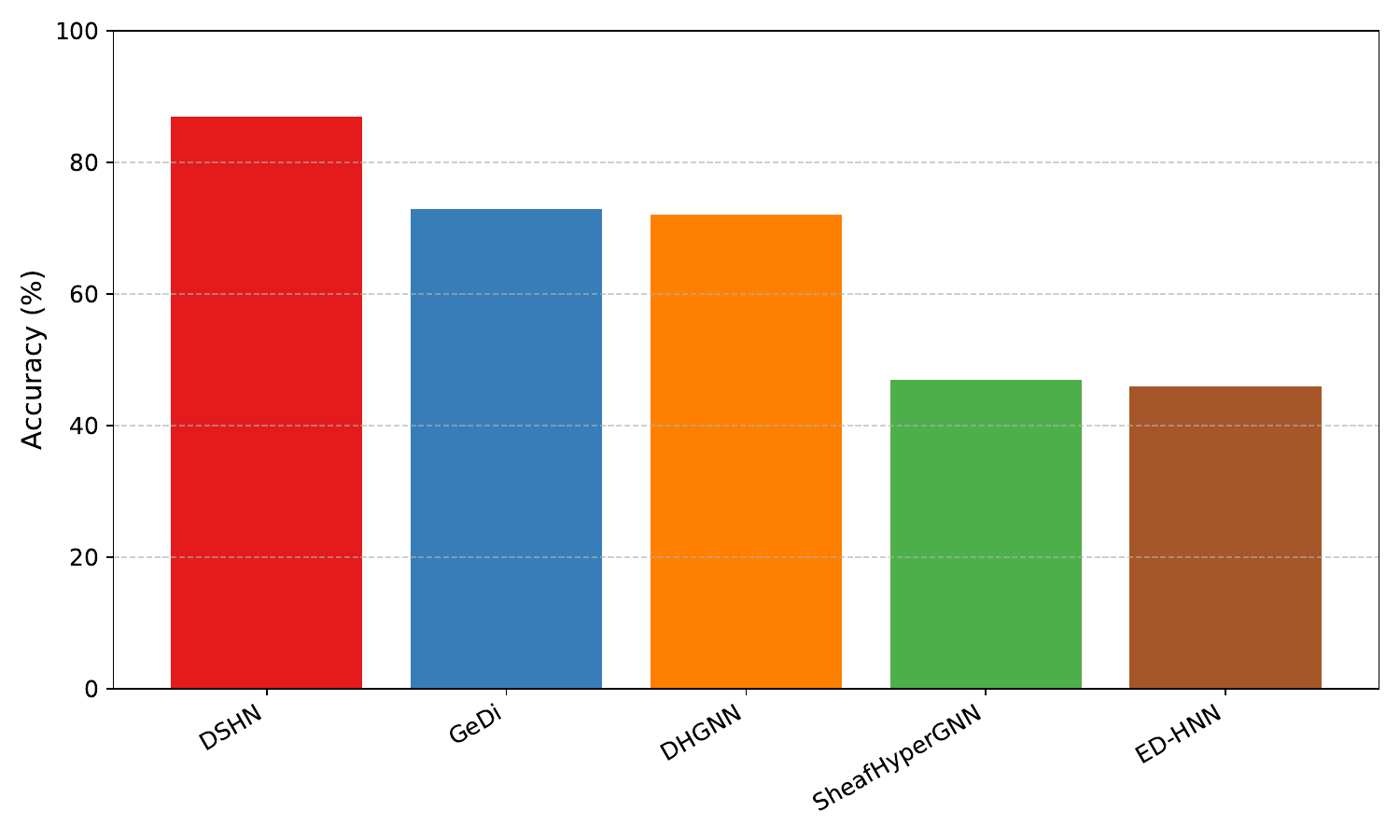}
    \caption{Comparison between models under the same number of parameters ($\sim 80k$) on the Telegram dataset.}
    \label{fig:comparison_models_parameters}
\end{figure}
    
\cref{fig:comparison_models_parameters} reports the average test accuracy of five representative models under approximately the same parameter budget. The results indicate that model size alone does not explain the performance of \approach. For instance, although SheafHyperGNN and ED-HNN have a comparable number of parameters, their accuracy is significantly lower, being these undirected methods. In contrast, \approach achieves an improvement of about 8\% over the strongest directed baselines, despite having the same number of parameters thanks to the expressive power associated to complex-valued and directional restriction maps. On the other hand, \cref{tab:flops_methods_full} shows a comparison of \light with other models. Both \light and SheafHyperGNN generally incur higher computational costs than traditional hypergraph neural networks, although achieving, at least in the case of \light, a substantially better accuracy. This overhead stems directly from the requirement to learn and apply restriction maps at every node-hyperedge incidence as detailed in the next paragraph.

\begin{table*}[h]
\centering
\caption{FLOPs and Parameter Count Across Datasets and Methods.}
\label{tab:flops_methods_full}
\resizebox{\textwidth}{!}{%
\begin{tabular}{l|cc|cc|cc|cc}
\toprule
Dataset 
& \multicolumn{2}{c|}{ED-HNN} 
& \multicolumn{2}{c|}{SheafHyperGNN} 
& \multicolumn{2}{c|}{DHGNN}
& \multicolumn{2}{c}{\light} \\
\midrule
& FLOPs & \#Params
& FLOPs & \#Params
& FLOPs & \#Params
& FLOPs & \#Params \\
\midrule

Cora
& $3\times10^{9}$  & $125{,}959$
& $2\times10^{11}$ & $404{,}576$
& $7\times10^{10}$ & $4{,}077{,}051$
& $2.0\times10^{11}$ & $437{,}447$ \\

Citeseer
& $5\times10^{9}$  & $271{,}174$
& $3\times10^{11}$ & $985{,}440$
& $3\times10^{11}$ & $12{,}751{,}318$
& $3.1\times10^{11}$ & $1{,}018{,}502$ \\

email-Enron
& $4\times10^{8}$  & $34{,}311$
& $6\times10^{8}$  & $37{,}984$
& $2\times10^{7}$  & $13{,}334$
& $7.0\times10^{8}$ & $70{,}855$ \\

email-EU
& $5\times10^{9}$  & $34{,}506$
& $3\times10^{10}$ & $38{,}752$
& $5\times10^{8}$  & $14{,}372$
& $2.6\times10^{10}$ & $71{,}050$ \\

Telegram
& $1\times10^{9}$  & $34{,}116$
& $2\times10^{9}$  & $37{,}216$
& $4\times10^{7}$  & $13{,}241$
& $1.9\times10^{9}$ & $70{,}660$ \\

Cornell
& $2\times10^{8}$  & $143{,}109$
& $1\times10^{9}$  & $473{,}184$
& $6\times10^{8}$  & $542{,}566$
& $1.9\times10^{9}$ & $506{,}437$ \\

Texas
& $2\times10^{8}$  & $143{,}109$
& $1\times10^{9}$  & $473{,}184$
& $6\times10^{8}$  & $542{,}566$
& $1.9\times10^{9}$ & $506{,}437$ \\

Wisconsin
& $3\times10^{8}$  & $143{,}109$
& $2\times10^{9}$  & $473{,}184$
& $1\times10^{9}$  & $658{,}370$
& $3.0\times10^{9}$ & $506{,}437$ \\

Chameleon
& $2\times10^{9}$  & $182{,}917$
& $2\times10^{10}$ & $632{,}416$
& $1\times10^{10}$ & $2{,}379{,}783$
& $2.7\times10^{10}$ & $632{,}416$ \\

Squirrel
& $9\times10^{9}$  & $167{,}813$
& $1\times10^{11}$ & $572{,}000$
& $7\times10^{10}$ & $4{,}924{,}172$
& $1.3\times10^{11}$ & $605{,}253$ \\

Roman-empire
& $2\times10^{10}$ & $54{,}162$
& $1\times10^{13}$ & $117{,}344$
& $1\times10^{12}$ & $6{,}850{,}778$
& $1.3\times10^{13}$ & $117{,}344$ \\

Amazon-ratings
& $3\times10^{10}$ & $53{,}317$
& $2\times10^{13}$ & $114{,}016$
& $1\times10^{12}$ & $7{,}398{,}933$
& $1.5\times10^{13}$ & $114{,}016$ \\

\bottomrule
\end{tabular}
}
\end{table*}

\paragraph{Asymptotic Complexity}  
We provide an estimate of the asymptotic complexity of our model at inference time. 

\begin{enumerate}
    \item \textbf{Linear Transformation.} The feature transformation is defined as \[\mathbf{X'} = (\mathbf{I}_n \otimes \mathbf{W}_{1}) \mathbf{X} \mathbf{W}_{2}\] where $\mathbf{W}_{1}\in\mathbb{R}^{d\times d}$ and $\mathbf{W}_{2}\in\mathbb{R}^{f\times f}$. The resulting complexity is $\mathcal{O}(n(d^{2}f + df^{2})) = \mathcal{O}(n(cd + cf)) = \mathcal{O}(nc^{2})$, where $c = df$.
    \item \textbf{Message Passing.}  
    Once the Laplacian operator has been assembled, message passing reduces to a sparse-dense 
    matrix multiplication of the form 
    \[
    \mathbf{Q}^{\vec{\mathcal{F}}}_N \mathbf{X'}.
    \] 
    The sparsity pattern of $\mathbf{Q}^{\vec{\mathcal{F}}}_N$ comes directly from the 
    incidence matrix: each hyperedge of size \(|e|\) induces \(|e|^{2}\) nonzero blocks 
    through the outer product \(\mathbf{B}^{(q)}(e, :)^{\dagger} \mathbf{B}^{(q)}(e,:)\). Summing across all hyperedges gives 
    a total of \(\mathsf{S}_2 = \sum_{e \in \mathcal{E}} |e|^{2} = \mathcal{O}\big(m\bar{e}^2)\) nonzero blocks, where $\bar{e}$ is the average hyperedge size\footnote{One could also upper bound the \(\mathsf{S}_2\) term with \(\mathcal{O}\big(m{n}^2)\), however, that approximation would be highly pessimistic, considering a fully-dense representation of the hypergraph, where each hyperedge connects all nodes.}.  
    Applying the Laplacian then requires
    \(
    \mathcal{O}(m \bar{e}^2 c)
    \) for diagonal maps and \(\mathcal{O}(m\bar{e}^2d c)\) with non-diagonal maps,
    
    \item \textbf{Learning the Sheaf.} Restriction maps are predicted as
    \[
        \Phi(\mathbf{x}_v, \mathbf{x}_e) = \sigma\!\left(\mathbf{V}\bigl(\mathbf{x}_v  \,\|\, \mathbf{x}_e\bigr)\right)
    \]
    where \(\mathbf{V}\) is a learnable transformation and \(\sigma\) a nonlinearity. The resulting \(f\)-dimensional vector is 
    then used as input to \(V\) for every node-edge incidence. Indicating with \(\bar{v}\) the average number of partecipations of a node to an hyperedge, the computational complexity is \(\mathcal{O}(\bar{v} m c)\) in the 
    \emph{diagonal} case, and \(\mathcal{O}(\bar{v} m d^{2} c)\) in the 
    \emph{non-diagonal} case.

   \item \textbf{Constructing the Laplacian.} In the hypergraph setting we assemble
   \[
      \mathbf{Q}^{\vec{\mathcal{F}}}_N  = \Dvc^{-\frac12} {\mathbf{B}^{(q)}}^\dagger \mathbf{D}_E^{-1} \mathbf{B}^{(q)} \Dvc^{-\frac12}.
    \]
     
    The work naturally splits into two steps:
    \begin{enumerate}
      \item \emph{Degree normalization.}  
      This involves computing the node and hyperedge degree matrices,
      \(\Dvc^{-\tfrac{1}{2}}\) and \(\mathbf{D}_E^{-1}\). For vertex degree normalization each node requires
      aggregating contributions from its incident hyperedges, giving
      \(\mathcal{O}(m \bar{e}d)\) operations in the diagonal case and
      \(\mathcal{O}(m \bar{e} d^{3})\) in the non-diagonal case (since
      each block is \(d\times d\)), to which it must be added the cost of inverting the block-diagonal matrices, adding to the complexity \(\mathcal{O}\big(nd) \) in the diagonal case and \(\mathcal{O}\big(nd^{3})\) in the non-diagonal case while since \( \mathbf{D}_E^{-1} \) is obtained by expanding to matrix for the scalar hyperedge degrees $\delta_e$ this cost adds a trascurable term to the asymptotic complexity.
      \item \emph{Sparse product.}  
        Forming the term    \[\mathbf{Q}^{\vec{\mathcal{F}}}_N  = \Dvc^{-\frac12} {\mathbf{B}^{(q)}}^\dagger \mathbf{D}_E^{-1} \mathbf{B}^{(q)} \Dvc^{-\frac12}.
        \]
        requires, for each hyperedge \(e\), generating block interactions among all pairs 
        of nodes it contains. This gives a total of \(\mathsf{S}_2=\sum_{e\in\mathcal{E}} |e|^{2}\) 
        block products. The cost is \(\mathcal{O}(\mathsf{S}_2 d)\) in the diagonal case and 
        \(\mathcal{O}(\mathsf{S}_2 d^{3})\) in the non-diagonal case. Since the normalization terms \(\Dvc^{-\tfrac{1}{2}}\) are block-diagonal operations they do not contribute substantially in the overall 
        complexity. Since \( \mathsf{S}_2 = \mathcal{O} \big( m \bar e^{2} \big) \), the dominant cost becomes 
        \(\mathcal{O}(m \bar e^{2} d)\) for diagonal maps and \(\mathcal{O}(m \bar e^{2} d^{3})\) 
        for non-diagonal maps.
    \end{enumerate}
    \end{enumerate}
    By summing the overall contributions we get:
    \(
    \mathcal{O}\!\left(
    n\,(c^{2}+d)
    \;+\;
    m\,(\,\bar e d+\bar e^{2}(d+c)+\bar v\,c\,)
    \right)
    \) in the diagonal case and \(
    \mathcal{O}\!\left(
    n\,(c^{2}+d^{3})
    +
    m\,(\,\bar e d^{3}+\bar e^{2}(d^{3}+dc)+\bar v\,d^{2}c\,)
    \right)
    \) in the non-diagonal case.
    
    \paragraph{Considerations on the Asymptotic Complexity}  
    The leading cost arises from the \emph{Laplacian assembly step}, which scales as 
    $\mathcal{O}(m \bar e^{2} d)$ in the diagonal case and 
    $\mathcal{O}(m \bar e^{2} d^{3})$ in the non-diagonal case. This quadratic dependence on the average hyperedge size $\bar e^{2}$ makes the method 
    particularly sensitive to hypergraphs with densely populated hyperedges. In practice, this means that even when the number of nodes and hyperedges are moderate, the presence of densely populated hyperedges can dominate the computational cost.

\subsection{Architectural Choices}




\paragraph{Layer Normalization}  
Each layer may optionally include layer normalization, with this choice considered a tunable hyperparameter, since it improves training stability and overall performance. Since the input signal to each convolutional layer is complex-valued, we adopt a complex normalization strategy as proposed in \cite{trabelsi2018deep,barrachina2023theoryimplementationcomplexvaluedneural}, where each complex feature is treated as a two-dimensional real vector $(\Re(x), \Im(x))$. Specifically, we compute the full $2 \times 2$ covariance matrix:  
\[
\Sigma =
\begin{bmatrix}
\sigma_{rr} & \sigma_{ri} \\
\sigma_{ri} & \sigma_{ii}
\end{bmatrix},
\quad
\tilde{\mathbf{x}} = \Sigma^{-\frac{1}{2}} (\mathbf{x} - \boldsymbol{\mu}),
\]
where $\boldsymbol{\mu} = (\mu_r, \mu_i)$ is the mean vector of the real and imaginary parts. The whitening transform $\Sigma^{-\frac{1}{2}}$ ensures that the two components are jointly normalized and decorrelated. To enhance flexibility, we apply an optional learnable affine transformation in the complex plane:  
\[
x_o = \gamma \tilde{x} + \beta,
\]
with trainable parameters $\gamma \in \mathbb{R}^{2 \times 2}$ and $\beta \in \mathbb{R}^2$. These are initialized as $\gamma = \tfrac{1}{\sqrt{2}} I_2$ and $\beta = 0$, thereby preserving the norm of unit-modulus inputs while maintaining the identity mapping at initialization.  

\paragraph{Residual Connections}  
Following observations from~\cite{bodnar2023neuralsheafdiffusiontopological}, we optionally include residual connections in our convolutional layers, which we found to help the architecture in certain datasets. The use of residuals is treated as a tunable hyperparameter (see \cref{app:tuning}). With this addition, a convolutional layer takes the form:  
\[
\mathbf{X}_{t+1} 
= \sigma\left( \mathbf{Q}^{\vec{\mathcal{F}}}_N\,(\mathbf{I}_n \otimes \mathbf{W}_1)\, \mathbf{X}_t\, \mathbf{W}_2 + \mathbf{X}_t \right) \in \mathbb{C}^{nd \times f}.
\]

\paragraph{Activation Function}  
For the activation function, we adopt the complex ReLU commonly employed in related works~\citep{zhang2021magnetneuralnetworkdirected,fiorini2023sigmanetlaplacianrule,fiorini2024let}. It is defined as:
\[
\mathrm{ReLU}(x) = 
\begin{cases}
    x, & \text{if } \Re(x) > 0, \\[2pt]
    0, & \text{otherwise}.
\end{cases}
\]



\paragraph{\light}
The architecture of \light is illustrated in \cref{fig:light-visualization}. The model takes as input a node feature matrix \(X_{\text{input}}\), which is projected into a higher-dimensional stalk space via a learnable linear transformation. This representation is then used both in the message-passing pipeline and as input to the MLP that predicts the restriction maps \( \vec{\mathcal{F}}_{v \trianglelefteq e}\). Unlike \approach, the Laplacian operator is built outside the computational graph, so the MLP parameters are not updated during training. Nevertheless, the initial projection layer remains trainable, which allows the model to indirectly influence the restriction maps: by shaping the input embeddings, the network can still control the outputs of the MLP. In this way, even though the restriction map MLPs are frozen, the model is still able to predict good values of embeddings and restriction maps, as confirmed by the empirical results in \cref{tab:results_real_world_complete,tab:results_synthetic}.

\begin{figure}[h]
    \centering
    \includegraphics[width=\linewidth]{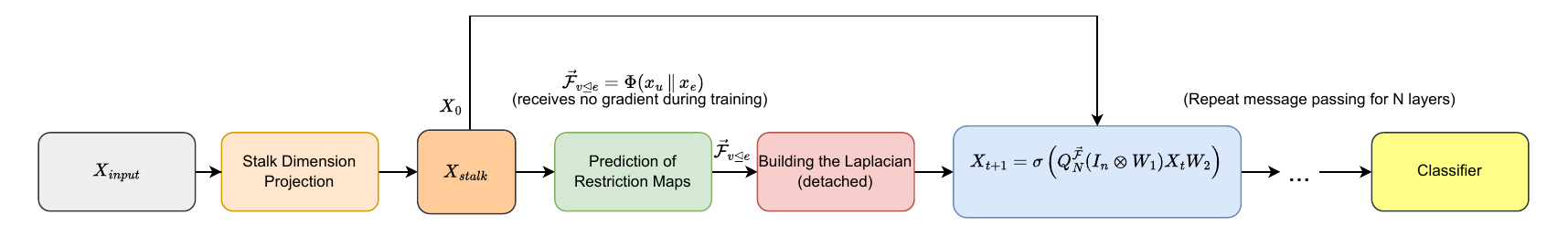}
    \caption{Illustration of the \light architecture. The Laplacian construction is detached from the computational graph, but the initial stalk projection layer remains trainable, allowing the model to indirectly influence the restriction maps.}
    \label{fig:light-visualization}
\end{figure}
\section{Experimental Setup}
\label{app:exp_setup}

\subsection{Hardware Details}
All experiments are carried out on two different workstations: one equipped with two NVIDIA RTX~4090 GPUs (24~GB each) and an AMD Ryzen~9~7950X 6-core processor, and another featuring an Intel Core i9-10940X 14-core CPU (3.3~GHz), 256~GB of RAM, and a single NVIDIA RTX~A6000 GPU with 48~GB of VRAM. We utilized the WandB platform to monitor training procedures and to carry out hyperparameter tuning for each model. 

\subsection{Selected Baselines} \label{app:baselines}

We compare our models against twelve state-of-the-art methods from the hypergraph learning literature. From the \emph{undirected} hypergraph-learning literature we include HGNN~\citep{feng2019hypergraph}, HNHN~\citep{HNHN2020}, 
UniGCNII~\citep{ijcai21-UniGNN}, LEGCN~\citep{LEGCN}, HyperND~\citep{tudisco}, AllDeepSets and AllSetTransformer~\citep{allset}, ED-HNN~\citep{wang2022equivariant}, SheafHyperGNN~\citep{duta} and PhenomNN~\citep{wang2023hypergraphenergyfunctionshypergraph}. From the \emph{directed} hypergraph-learning literature, we consider GeDi-HNN~\citep{fiorini2024let} and DHGNN~\citep{DHGNN} as our baselines. DHGNN was originally designed for link prediction on directed graph datasets and relies on a learnable embedding table to represent node features. In our evaluation, we report the model's performance using both this original embedding approach and an alternative setup with explicit node features. 



\subsection{Hyperparameter tuning}\label{app:tuning}
For tuning all the models, we employ a Bayesian optimization method. All models are trained for up to 500 epochs with early stopping set to 200 epochs. We employ Adam~\citep{kingma2017adammethodstochasticoptimization} for optimizing the model parameters with \(\text{lr} \in \{0.02,\,0.01,\,0.005,\,0.001\}\), \(\text{wd}\in\{0,\,5\times10^{-5},5\times10^{-4}\}\). For all the models, we adopt a dropout \(\in\{0.1,0.2,\ldots,0.9\}\), and for each model that has a selectable number of layers for the final classifier we fix it to 2. 
For each baseline, we select a range of parameters consistent with those investigated in their respective original works:

\begin{itemize}
  \item AllDeepSets, ED-HNN: basic blocks \(\{2,4,8\}\); MLPs per block \(\{1,2\}\); MLP hidden width  \(\{64,128,256,512\}\); classifier width \(\{64,128,256\}\).
  \item AllSetTransformer: basic blocks \(\{2,4,8\}\); MLPs per block \(\{1,2\}\); hidden MLP width \(\{64,128,256,512\}\); classifier width \(\{64,128,256\}\); heads \(\{1,4,8\}\).
  \item UniGCNII, HGNN, HNHN, LEGCN: basic blocks \(\{2,4,8\}\); MLP hidden width \(\{64,128,256,512\}\).
  \item HyperND: classifier width \{64,128,256\}.
  \item PhenomNN: basic blocks \(\{2,4,8\}\); hidden width \(\{64,128,256,512\}\); \(\lambda_{0}\in\{0.1,0,1\}\); \(\lambda_{1}\in\{0.1,50,1,20\}\); propagation steps \(\{8,16\}\).
  \item GeDi-HNN: convolutional layers \(\{1,2,3\}\); MLP hidden width \(\{64,128,256,512\}\); classifier width \(\{64,128,256\}\).
  \item DHGNN, DHGNN (w/ emb.),  basic blocks \(\{2,4,8\}\); hidden width \(\{64,128,256,512\}\), classifier width \{64,128,256\}.
  \item SheafHyperGNN, \approach, \light : 
    \begin{itemize}
      \item \(\text{sheaf dropout}\in\{\texttt{false},\texttt{true}\}\)
      \item \(\text{convolutional layers}\in\{1,\ldots,5\}\)
      \item MLP hidden width \(\{64,128,256,512\}\)
      \item classifier width \{64,128,256\}
      \item \(d\in\{1,\ldots,6\}\)
      \item \(\text{sheaf actvation}\in\{\texttt{sigmoid},\texttt{tanh},\texttt{none}\}\)
      \item \(\text{left projection}\in\{\texttt{false},\texttt{true}\}\)
      \item \(\text{residual}\in\{\texttt{false},\texttt{true}\}\)
      \item \(\text{dynamic sheaf}\in\{\texttt{false},\texttt{true}\}\)
      \item \(q\in\{0.00,\,0.05,\,0.10,\,0.15,\,0.20,\,0.25\}\) (for \approach \& \light only)
    \end{itemize}
\end{itemize}
\subsection{Datasets Description}\label{app:datasets-description}

We follow the data splits proposed by~\cite{zhang2021magnetneuralnetworkdirected} for the \texttt{Telegram}, \texttt{Texas}, \texttt{Wisconsin},
\texttt{Cornell}.
For \texttt{Chameleon} and \texttt{Squirrel} we adopt the splits proposed by~\cite{platonov2023a}.
For \texttt{Roman-empire} and \texttt{Amazon-Ratings} we adopt the splits proposed by \cite{platonov2023a} and adopt the splits of~\cite{allset} for the remaining ones. 
In all cases, the datasets are partitioned into 50\% training, 25\% validation, and 25\% test samples. 
For the \texttt{email-Enron} and \texttt{email-EU} datasets and for all synthetic datasets, node attributes are not available. In these cases, we resort to structural features, representing each node by its degree. The statistics of the 12 real-world datasets as well as synthetic ones are provided in \cref{tab:dataset_statistics}. 
The datasets used for the experiments are:
\begin{itemize}
    \item  \texttt{Cora}, \texttt{Citeseer}
Standard citation benchmarks in which vertices represent research papers and directed edges encode citation links. Node attributes are constructed from text using bag-of-words representations of the documents.

\item \texttt{email-Enron}, \texttt{email-EU}
A corporate email communication network built from Enron’s message logs. Nodes correspond to email accounts and edges record sender interactions. As ground-truth labels are unavailable, we derive node classes via the Spinglass community detection method~\cite{Reichardt_2006}.

\item \texttt{Texas, Wisconsin, Cornell}
WebKB datasets collected from university computer science departments. Each node is a webpage, hyperlinks are edges, and features are bag-of-words over page content. Pages are annotated into five categories: student, project, course, staff, and faculty.

\item \texttt{Telegram}
An interaction network extracted from Telegram, capturing exchanges among users who propagate political content.

\item \texttt{Squirrel}, \texttt{Chameleon}
The Squirrel and Chameleon datasets consist of articles from the English Wikipedia (December 2018). Nodes represent articles, and edges represent mutual links between them. Node features indicate the presence of specific nouns in the articles. Nodes are grouped into five categories based on the original regression targets.

\item \texttt{Roman-empire} The dataset is based on the \emph{Roman Empire} article from English Wikipedia, which was selected since it is one of the longest articles on Wikipedia and it follows the construction proposed by \cite{platonov2023a}. Each node in the graph corresponds to one (non-unique) word in the text.

\item \texttt{Amazon-ratings}
The dataset, as proposed by \cite{platonov2023a}, is based on the Amazon product co-purchasing network metadata dataset from SNAP Datasets~\cite{leskovec2014snap}. Nodes are products (books, music CDs, DVDs, VHS video tapes), and edges connect products that are frequently bought together. 

\item \texttt{Synthetic}
Introduced in \cite{fiorini2024let} by following the methodology adopted in~\cite{zhang2021magnetneuralnetworkdirected}, these datasets are built as follows: a vertex set \(V\) is partitioned into \(c\) equally sized classes \(C_1,\ldots,C_c\). For each class \(C_i\), we sample \(I_i\) \emph{intra-class} hyperedges that are undirected. The cardinality of each hyperedge is drawn uniformly from \(\{h_{\min},\ldots,h_{\max}\}\), and its nodes are sampled uniformly from \(C_i\). For each ordered pair of distinct classes \((C_i,C_j)\) with \(i<j\), we create \(I_o\) \emph{inter-class directed} hyperedges. For every such hyperedge \(e\), the tail set \(T(e)\) is sampled from \(C_i\) and the head set \(H(e)\) from \(C_j\); the sizes \(|T(e)|\) and \(|H(e)|\) are drawn uniformly from \(\{h_{\min},\ldots,h_{\max}\}\).

\end{itemize}

\begin{table*}[h]
\centering
\caption{Statistics of the datasets used in our experiments. Reported are the number of nodes, features, hyperedges, and classes, as well as the average hyperedge size ($\lvert e \rvert$), the average node degree ($\lvert v \rvert$), and the clique-expansion (CE) homophily computed as in \cite{wang2022equivariant}.}
\label{tab:dataset_statistics}
\resizebox{\textwidth}{!}{%
\begin{tabular}{lrrrrrrr}
\toprule
\textbf{Dataset} & \textbf{\# Nodes} & \textbf{\# Features} & \textbf{\# Hyperedges} & \textbf{\# Classes} & \textbf{avg $|e|$} & \textbf{avg $|v|$} & \textbf{CE homophily} \\
\midrule
Roman-empire      & 22,662 & 300    & 22,662 & 18 & 2.73  & 2.73  & 0.2363 \\
Squirrel    & 2,223  & 2,089  & 2,060  & 5  & 23.81 & 22.07 & 0.2448 \\
email-EU    & 986    & --     & 787    & 10 & 43.36 & 34.61 & 0.2608 \\
Telegram    & 245    & 1      & 183    & 4  & 49.70 & 37.12 & 0.2854 \\
Chameleon   & 890    & 2,325  & 797    & 5  & 12.11 & 10.84 & 0.3221 \\
email-Enron & 143    & --     & 139    & 7  & 19.58 & 19.03 & 0.3251 \\
Cornell     & 183    & 1,703  & 96     & 5  & 4.07  & 2.14  & 0.4200 \\
Wisconsin   & 251    & 1,703  & 170    & 5  & 3.94  & 2.67  & 0.4398 \\
Amazon-ratings      & 24,492 & 300    & 24,456 & 5  & 5.63  & 5.62  & 0.4460 \\
Texas       & 183    & 1,703  & 110   & 5  & 3.81 & 2.29 & 0.5049 \\
Citeseer    & 3,312  & 3,703  & 1,951  & 6  & 3.35 & 1.98  & 0.7947 \\
Cora        & 2,708  & 1,433  & 1,565  & 7  & 4.47  & 2.58  & 0.8035 \\
\midrule
$I_o=10$ & 500 & -- & 250 & 5 & 9.05  & 4.53  & 0.6233 \\
$I_o=30$ & 500 & -- & 450 & 5 & 10.79 & 9.71  & 0.5020 \\
$I_o=50$ & 500 & -- & 650 & 5 & 11.63 & 15.12 & 0.4528 \\
\bottomrule
\end{tabular}
}
\end{table*}

\subsection{Directed Hypergraph from a Directed Graph}\label{app:transformation}

Given a directed graph \(G=(V,E)\), let the out-neighborhood of \(v\in V\) be
\[
N_{\mathrm{out}}(v)=\{\,w\in V \mid (v,w)\in E\,\}.
\]
We build a directed hypergraph \(\mathcal{H}=(V,\mathcal{E})\) by creating one hyperedge \(e_v\) for each node with its outgoing edges and setting
\[
T(e_v)=\{v\},\qquad H(e_v)=N_{\mathrm{out}}(v).
\]
Thus every hyperedge has a tail consisting of a single node and a head set containing all nodes belonging to the neighborhood of that tail. A clear example of this construction procedure can be visualized in \cref{fig:transformation}.

\begin{figure}[h]
    \centering
    \includegraphics[width=0.8\linewidth]{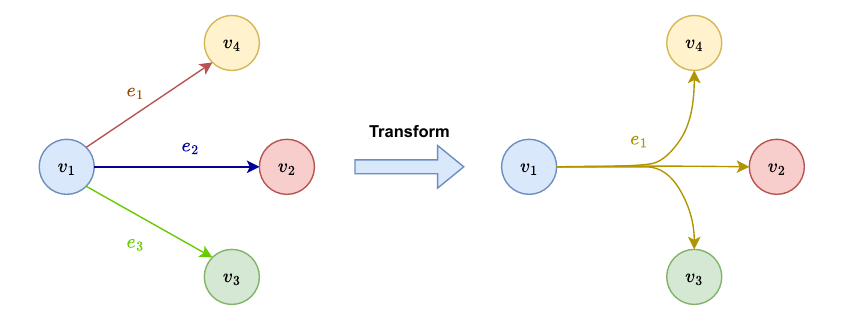}
    \caption{Example of the creation of a directed hyperedge from the out-neighborhood of a node. 
    Suppose we have a graph where node $v_1$ connects to nodes $v_2$, $v_3$, and $v_4$, so that $(v_1,v_2)$, $(v_1,v_3)$, and $(v_1,v_4)$ belong to $E$. The construction procedure yields a directed hyperedge $e_{1}$ with tail set $T(e_{1}) = \{v_1\}$ and head set $H(e_{1}) = \{v_2, v_3, v_4\}$.}
    \label{fig:transformation}
\end{figure}

This formulation preserves the source-target semantics of the original graph by expressing them as a higher-order relation. 
Such hyperedges are often referred to as \emph{forward directed} hyperedges \citep{GALLO1993177}. When every hyperedge is forward directed, the structure is a \emph{forward directed hypergraph}, which is the case for all real-world datasets considered in this work.

\subsection{Hyperedge Classification for Molecular Reaction Type Prediction}\label{app:hyperedge}

In \cref{tab:dataset_molecular_1} and \cref{tab:dataset_molecular_2} we show the distribution of labels for the directed molecular reaction prediction datasets employed for the hyperedge classification task. As mentioned in \cref{app:extended_results}, we evaluate all models on additional real-world molecular prediction datasets (see \cref{tab:f1_scores}) from the perspective of a hyperedge classification task.
To do so, before feeding the output of the last convolutional layer to the classifier, we perform an aggregation (sum) of all node representations belonging to a given hyperedge. Specifically, if $\mathbf{X}_{\text{node}}$ is the final feature matrix of shape $N \times F$, where $N$ is the number of nodes and $F$ is the feature dimension, we can compute hyperedge-level representations using a (real-valued and binary) incidence matrix $\mathbf{H}$ of shape $N \times E$ as follows:
\[
\mathbf{H}^\top \mathbf{X}_{\text{node}} = \mathbf{X}_{\text{edge}} \in \mathbb{R}^{E \times F}.
\]
The resulting matrix $\mathbf{X}_{\text{edge}}$ is then fed to the classifier which will output a matrix of shape $E \times C$, where $C$ is the number of classes.

\begin{table}[h]
\centering
\begin{tabular}{c c c}
\toprule
\textbf{Class} & \textbf{\# Hyperedges} & \textbf{Percentage (\%)} \\
\midrule
0 & 15{,}151 & 30.29 \\
1 & 11{,}896 & 23.78 \\
2 & 5{,}662  & 11.32 \\
3 & 909      & 1.82 \\
4 & 672      & 1.34 \\
5 & 8{,}237  & 16.47 \\
6 & 4{,}614  & 9.23 \\
7 & 811      & 1.62 \\
8 & 1{,}834  & 3.67 \\
9 & 230      & 0.46 \\
\bottomrule
\end{tabular}
\caption{Label distribution for the Molecular-1 dataset.}
\label{tab:dataset_molecular_1}
\end{table}
\begin{table}[h]
\centering
\begin{tabular}{c c c}
\toprule
\textbf{Class} & \textbf{\# Hyperedges} & \textbf{Percentage (\%)} \\
\midrule
0 & 960   & 31.78 \\
1 & 1{,}536 & 50.84 \\
2 & 213   & 7.05 \\
3 & 54    & 1.79 \\
4 & 226   & 7.48 \\
5 & 32    & 1.06 \\
\bottomrule
\end{tabular}
\caption{Label distribution for the Molecular-2 dataset.}
\label{tab:dataset_molecular_2}
\end{table}
\section{On previous proposals of the Sheaf Hypergraph Laplacian} \label{app:iulia-sheaf}

In this section, we revisit the definition of the Sheaf Hypergraph Laplacian proposed in~\cite{duta}, noting that it fails to satisfy basic spectral properties expected of a Laplacian operator, most notably positive semidefiniteness.
This shortcoming motivates our formulation, which, as discussed in \cref{sec:generalization-properties}, constitutes (to our knowledge) the first definition of a Sheaf Hypergraph Laplacian that is fully consistent with the spectral requirements of a convolutional operator also in the undirected setting. For comparison, we recall the (called linear in the paper---the nonlinear one is, in essence, the Laplacian of a 2-uniform hypergraph) Laplacian of \cite{duta}.
\begin{definition} \label{def:iulia-laplacian}
Let $\mathcal{H}=(V,E)$ be a hypergraph with hyperedge degrees $\delta_e$ and let $\mathcal{F}_{v \trianglelefteq e} : \mathbb{R}^d \to \mathbb{R}^d$ be linear restriction maps from node $v$ to hyperedge $e$.
The Laplacian $\mathbf{L}^{\mathcal{F}} \in \mathbb{R}^{nd \times nd}$ has $d\times d$ blocks indexed by $u,v\in V$:
\[
(\mathbf{L}^{\mathcal{F}})_{uu}
= \sum_{e:\,u\in e}\frac{1}{\delta_e}\,
\mathcal{F}_{u \trianglelefteq e}^{\top}\mathcal{F}_{u \trianglelefteq e},
\qquad
(\mathbf{L}^{\mathcal{F}})_{uv}
= -\!\!\!\sum_{\begin{subarray}{c} \\ e: u,v\in e \\  v \neq u\end{subarray}}
\frac{1}{\delta_e}\,
\mathcal{F}_{u \trianglelefteq e}^{\top}\mathcal{F}_{v \trianglelefteq e}
\]
\end{definition}

\cref{def:iulia-laplacian} essentially coincides with a Signless Hypergraph Laplacian, except for the fact that the off-diagonal entries are flipped from positive to negative.\footnote{This is consistent with their implementation.}
%
%
Such a sign-flip suffices to build a positive semidefinite Laplacian matrix exclusively in the 2-uniform case, where the Laplacian operator for a graph can be obtained by assigning an arbitrary orientation to each edge.
Notice that, in the undirected case, our Laplacian differs from theirs due to featuring a coefficient of $(1-\frac{1}{\delta_e})$ in the diagonal term, rather than $\frac{1}{\delta_e}$.
Considering the proposed definition, we can compute the equation of the Laplacian seen as a linear operator for a signal $x \in \mathbb{R}^{nd}$ as follows:
\begin{align*}\label{eq:disagreement-iulia}
\bigl(\mathbf{L}^{\mathcal{F}}(\mathbf{x})\bigr)_u 
&= \sum_{v \in V} (\mathbf{L}^{\mathcal{F}})_{uv} \, \mathbf{x}_v \\ 
\nonumber &= \sum_{e:\,u \in e} 
\frac{1}{\delta_e}\mathcal{F}_{u \trianglelefteq e}^{\top} \mathcal{F}_{u \trianglelefteq e} \, \mathbf{x}_u
-
\sum_{e:\,u \in e} \sum_{\substack{v \in e \\ v \neq u}} 
\frac{1}{\delta_e}\,
\mathcal{F}_{u \trianglelefteq e}^{\top} 
\mathcal{F}_{v \trianglelefteq e}\, \mathbf{x}_v \\ 
\nonumber &= \sum_{e:\,u \in e}
\frac{1}{\delta_e}\,
\left(
\mathcal{F}_{u \trianglelefteq e}^{\top} 
\mathcal{F}_{u \trianglelefteq e}\, \mathbf{x}_u
-
\sum_{\substack{v \in e \\ v \neq u}} 
\mathcal{F}_{u \trianglelefteq e}^{\top} 
\mathcal{F}_{v \trianglelefteq e}\, \mathbf{x}_v
\right) \\ 
\nonumber  &= \sum_{e:\,u \in e}
\frac{1}{\delta_e}\,
\mathcal{F}_{u \trianglelefteq e}^{\top}
\left(
\mathcal{F}_{u \trianglelefteq e}\, \mathbf{x}_u
-
\sum_{\substack{v \in e \\ v \neq u}} 
\mathcal{F}_{v \trianglelefteq e}\, \mathbf{x}_v
\right).
\end{align*}

Which substantially differs from the expression reported in their respective work, which reads:
$$
\bigl(\mathbf{L}^{\mathcal{F}}(\mathbf{x})\bigr)_u  =
\sum_{e:\,u \in e}
\frac{1}{\delta_e}\,
\mathcal{F}_{u \trianglelefteq e}^{\top}
\sum_{\substack{v \in e \\ v \neq u}} 
\left(
\mathcal{F}_{u \trianglelefteq e}\, \mathbf{x}_u
-
\mathcal{F}_{v \trianglelefteq e}\, \mathbf{x}_v
\right).
$$
Crucially, the latter is the expression that is obtained with our operator in the undirected case, as reported in \cref{eq:disagreement-new}.

Let us illustrate the issue with a numerical example.
Let us consider a hypergraph with node set 
$V=\{v_1,v_2,v_3,v_4\}$ and  $E = \{e_1, e_2\}$ with hyperedges
$e_1=\{v_1,v_2,v_3\}$, $e_2=\{v_2,v_3,v_4\}$, in the case of a \emph{trivial} Sheaf (i.e. $\mathcal{F}_{v\trianglelefteq e} = 1$).
Let $\delta_{e}$ denote the hyperedge size and let
$\mathcal{F}_{u \trianglelefteq e}\in\mathbb{R}$ be the (scalar) restriction on incidence $(u,e)$.

By \cref{def:iulia-laplacian},
the entries of the Laplacian are:
\[
\begin{aligned}
(\mathbf{L}^{\mathcal{F}})_{v_1 v_1} &= \tfrac{1}{\delta_{e_1}}\,
\mathcal{F}_{v_1 \trianglelefteq e_1}^{\top}\mathcal{F}_{v_1 \trianglelefteq e_1}, \\[6pt]
(\mathbf{L}^{\mathcal{F}})_{v_2 v_2} &= \tfrac{1}{\delta_{e_1}}\,
\mathcal{F}_{v_2 \trianglelefteq e_1}^{\top}\mathcal{F}_{v_2 \trianglelefteq e_1}
+ \tfrac{1}{\delta_{e_2}}\,
\mathcal{F}_{v_2 \trianglelefteq e_2}^{\top}\mathcal{F}_{v_2 \trianglelefteq e_2}, \\[6pt]
(\mathbf{L}^{\mathcal{F}})_{v_3 v_3} &= \tfrac{1}{\delta_{e_1}}\,
\mathcal{F}_{v_3 \trianglelefteq e_1}^{\top}\mathcal{F}_{v_3 \trianglelefteq e_1}
+ \tfrac{1}{\delta_{e_2}}\,
\mathcal{F}_{v_3 \trianglelefteq e_2}^{\top}\mathcal{F}_{v_3 \trianglelefteq e_2}, \\[6pt]
(\mathbf{L}^{\mathcal{F}})_{v_4 v_4} &= \tfrac{1}{\delta_{e_2}}\,
\mathcal{F}_{v_4 \trianglelefteq e_2}^{\top}\mathcal{F}_{v_4 \trianglelefteq e_2}, \\[12pt]
(\mathbf{L}^{\mathcal{F}})_{v_1 v_2} &= -\tfrac{1}{\delta_{e_1}}\,
\mathcal{F}_{v_1 \trianglelefteq e_1}^{\top}\mathcal{F}_{v_2 \trianglelefteq e_1}, \\[6pt]
(\mathbf{L}^{\mathcal{F}})_{v_1 v_3} &= -\tfrac{1}{\delta_{e_1}}\,
\mathcal{F}_{v_1 \trianglelefteq e_1}^{\top}\mathcal{F}_{v_3 \trianglelefteq e_1}, \\[6pt]
(\mathbf{L}^{\mathcal{F}})_{v_1 v_4} &= 0, \\[6pt]
(\mathbf{L}^{\mathcal{F}})_{v_2 v_3} &= -\tfrac{1}{\delta_{e_1}}\,
\mathcal{F}_{v_2 \trianglelefteq e_1}^{\top}\mathcal{F}_{v_3 \trianglelefteq e_1}
-\tfrac{1}{\delta_{e_2}}\,
\mathcal{F}_{v_2 \trianglelefteq e_2}^{\top}\mathcal{F}_{v_3 \trianglelefteq e_2}, \\[6pt]
(\mathbf{L}^{\mathcal{F}})_{v_2 v_4} &= -\tfrac{1}{\delta_{e_2}}\,
\mathcal{F}_{v_2 \trianglelefteq e_2}^{\top}\mathcal{F}_{v_4 \trianglelefteq e_2}, \\[6pt]
(\mathbf{L}^{\mathcal{F}})_{v_3 v_4} &= -\tfrac{1}{\delta_{e_2}}\,
\mathcal{F}_{v_3 \trianglelefteq e_2}^{\top}\mathcal{F}_{v_4 \trianglelefteq e_2}.
\end{aligned}
\]

Numerically, we have:
\[
\begin{bmatrix}
\tfrac{1}{3} & -\tfrac{1}{3} & -\tfrac{1}{3} & 0 \\[6pt]
-\tfrac{1}{3} & \tfrac{2}{3} & -\tfrac{2}{3} & -\tfrac{1}{3} \\[6pt]
-\tfrac{1}{3} & -\tfrac{2}{3} & \tfrac{2}{3} & -\tfrac{1}{3} \\[6pt]
0 & -\tfrac{1}{3} & -\tfrac{1}{3} & \tfrac{1}{3}
\end{bmatrix}.
\]
The spectrum of the above Laplacian is:
\[
\operatorname{eig}\left(\mathbf{L^\mathcal{F}}\right)
= \left\{\tfrac{4}{3},\ \tfrac{1}{3},\ \tfrac{1+\sqrt{17}}{6},\ \tfrac{1-\sqrt{17}}{6}\right\}.
\]

Since a negative eigenvalue appears, \(\mathbf{L}^{\mathcal{F}}\) is \emph{not} positive semidefinite in this example.

\end{document}